\documentclass{article}

\usepackage[preprint]{neurips_2026}

\usepackage[utf8]{inputenc}
\usepackage[T1]{fontenc}
\usepackage{hyperref}
\usepackage{url}
\usepackage{booktabs}
\usepackage{amsfonts}
\usepackage{amsmath,amssymb,amsthm}
\usepackage{nicefrac}
\usepackage{microtype}
\usepackage{xcolor}
\usepackage{algorithm}
\usepackage{algorithmic}
\usepackage{graphicx}
\usepackage{multirow}
\usepackage{enumitem}
\usepackage{float}

\newtheorem{theorem}{Theorem}
\newtheorem{proposition}{Proposition}
\newtheorem*{proposition*}{Proposition}

\newtheorem{lemma}{Lemma}
\theoremstyle{remark}
\newtheorem*{remark}{Remark}

\title{The Context-Ready Transformer}

\author{%
  Mahesh Godavarti \\
  A Carrot, Inc \\
  \texttt{m@acarrot.com}
}

\begin{document}

\maketitle

\begin{abstract}
We introduce the \emph{context-ready transformer}, a new recurrent neural network architecture built from a $D$-layer transformer block that pre-contextualizes each token before it enters the block. During left-to-right generation, a correction network combines the previous position's block output---a cached summary of past context---with the current token embedding, so the token enters the block already contextualized rather than as a raw embedding. At sequential inference, the correction chain makes the architecture a recurrent neural network. For training, we unroll the correction process $K$ times over the full sequence, processing all positions in parallel at each step. A pretrained transformer can also be converted to a context-ready model by adding a zero-initialized correction FFN and fine-tuning. We evaluate across widths, depths, block sizes, and two datasets, with all comparisons against standard transformers, variants, and ablations. A $D{=}5$ model beats a 12-layer transformer while generating $1.7\times$ faster on an A100. With $K{=}10$, a single-layer model ($D{=}1$) beats a 6-layer transformer with a $2.6\times$ inference speedup, and sequential inference matches parallel $K{=}10$ to within 0.01~PPL. The architecture benefits most from wide representations and long contexts. On a pointer-chasing task, $D{=}1$ trained with BPTT solves all 10 composition levels, while standard transformers exhibit staircase-like depth dependence.
\end{abstract}

\section{Introduction}
\label{sec:intro}

In autoregressive generation, a standard $N$-layer transformer predicts the next token, assigns it a context-free base embedding, and devotes part or all of its $N$ layers to re-contextualize it. This round-trip from context to token ID back to context is an artifact of the architecture, not of the problem.

\textbf{Context-ready inference.} The \emph{context-ready transformer} shortens this round-trip. Consider sequential left-to-right generation: as the model processes token $t$, the block output $z_t$ encodes the context of tokens $0, \ldots, t$. When token $t{+}1$ arrives with embedding $e_{t+1}$, a correction network computes a correction from $z_t$ and $e_{t+1}$. The token enters the block as $e_{t+1}$ plus this correction---carrying contextual information from all preceding tokens---rather than as a raw embedding. Fewer layers are needed to fully contextualize the token, because the correction has already done part of the work.

\textbf{The training challenge.} This sequential process is exact at inference but inherently serial: each position's correction depends on the previous position's fully computed block output. A classical RNN faces the same issue and solves it with backpropagation through time (BPTT), which unrolls the recurrence for all $T$ positions---making the sequential training depth scale with sequence length.

We take a different approach. We approximate the sequential process by \emph{unrolling} it $K$ times over the full sequence, where $K$ is a small constant independent of $T$. All $T$ positions are still processed in parallel---just as in a standard transformer---but the correction is refined over $K$ unrolling steps rather than $T$. Starting from raw embeddings at all positions, we run a shared-weight block and compute the correction at each position $t$ from the block output at positions $0, \ldots, t{-}1$. We then update all embeddings with these corrections and repeat. Each unrolling step refines the corrections using increasingly contextualized inputs. Crucially, $K$ controls the depth of the computation graph---not $T$---so a classical RNN requires $O(T)$-deep unrolling while the context-ready transformer requires only $O(K)$. In practice, $K = 5$ suffices for convergence at the depths tested ($D \geq 5$; see Table~\ref{tab:seq}).

\textbf{How $K$ unrolling steps at training lead to zero iterations at inference.} Two design choices make this possible.
\begin{itemize}[leftmargin=*,itemsep=4pt,topsep=4pt]
\item \textbf{Non-cumulative correction.} Each iteration computes a correction from scratch rather than building on the previous one. The iteration takes the form $x^{(k)} = e + F_\theta(x^{(k-1)}, e)$: the base embedding plus a correction that depends on both the previous iterate's block output and the token embedding. Unlike a residual network, which computes $x^{(K)} = e + f(x^{(0)}) + \cdots + f(x^{(K-1)})$---a sum of $K$ corrections---our formulation yields $x^{(K)} = e + F_\theta(x^{(K-1)}, e)$: a single correction. Previous iterations serve only to bring $x^{(K-1)}$ close to the fixed point; once it has converged, additional iterations produce the same output.

\item \textbf{Past-only contextualization.} The correction at position $t$ depends on two quantities: the block output $z_{t-1}$, which encodes the context of tokens $0, \ldots, t{-}1$, and the token embedding $e_t$. Since $z_{t-1}$ is already cached from processing the previous token, the correction can be computed as soon as token $t$ arrives.
\end{itemize}

Together, these two properties mean that sequential generation naturally produces the converged correction for each new token without iteration (Section~\ref{sec:theory}).

\textbf{A new kind of RNN.} At sequential inference, the correction at position $t$ depends on $z_{t-1}$, which depends on $z_{t-2}$, and so on---creating a recurrent computation that unfolds over all $T$ positions. The non-cumulative + past-only structure is what enables efficient training: it converts the sequential recurrence into a fixed-point problem, so instead of exact $T$-step BPTT, we can train by unrolling $K$ steps over the full sequence in parallel ($K \ll T$). This is not equivalent to BPTT---positions beyond the first $K$ receive approximate rather than exact gradients---but in practice $K = 5$ suffices at the depths tested.

\textbf{Experimental evidence.} We evaluate across widths $C \in \{50, \ldots, 2048\}$, depths $D \in \{1, \ldots, 23\}$, block sizes $\{64, 256, 512, 1024\}$, two datasets (OpenWebText, Wikipedia), and a synthetic reasoning task (Section~\ref{sec:experiments}). The most impactful findings: $D{=}5$ at $C{=}1120$ beats a 12-layer transformer ($C{=}768$), halving inference depth and generating $1.7\times$ faster on an A100. With $K{=}10$, $D{=}1$ at $C{=}2048$ beats a 6-layer transformer ($C{=}1088$) with a $2.6\times$ inference speedup. Sequential inference matches $K{=}10$ training to within 0.01~PPL, confirming that streaming exactness holds in practice. On pointer chasing, $D{=}1$ trained with BPTT solves all 10 composition levels while standard transformers exhibit staircase-like depth dependence. The architecture benefits most from wide representations and long contexts, and requires a dedicated correction network to be effective (Section~\ref{sec:exp-ablations}). Any pretrained transformer can be converted by adding a zero-initialized correction FFN and fine-tuning.

\section{Related Work}
\label{sec:related}

\textbf{Weight-shared and depth-adaptive architectures.} ALBERT~\citep{lan2020albert}, Universal Transformers~\citep{dehghani2019universal} with ACT~\citep{graves2016adaptive}, Deep Equilibrium Models~\citep{bai2019deep}, and Huginn~\citep{geiping2025huginn} share weights across layers or iterations and iteratively refine hidden states, but do not use a dedicated past-output/token-aware pre-block correction of the kind proposed here. We compare against standard transformers as the representative baseline for how tokens enter the block. Weight sharing in the context-ready transformer arises naturally from unrolling the sequential correction process into training iterations that process all positions in parallel, not as a design choice for parameter efficiency.

\textbf{Early exit and layer skipping.} LayerSkip~\citep{elhoushi2024layerskip}, ADEPT~\citep{yoo2026adept}, and PonderNet~\citep{banino2021pondernet} reduce average layer count but require learned stopping mechanisms. Context-ready uses a fixed depth with no stopping criterion.

\textbf{Lookahead Decoding}~\citep{fu2024lookahead} and CLLMs~\citep{kou2024cllms} apply Jacobi iteration to standard transformers as a \emph{decoding strategy}. \citet{bai2021accelerating} accelerate DEQ inference by parallelizing fixed-point solves via Jacobi-style updates. Context-ready is an \emph{architectural change}: the $K$-step unrolling can be viewed as Jacobi iterations on the correction fixed-point equation, but the non-cumulative past-only structure guarantees that a single left-to-right streaming pass recovers the exact correction without any iteration.

\textbf{Subquadratic and recurrent alternatives.} Mamba~\citep{gu2024mamba}, RWKV~\citep{peng2023rwkv}, Griffin~\citep{de2024griffin}, and xLSTM~\citep{beck2024xlstm} replace causal self-attention with compressed recurrent state to achieve linear-time inference. The context-ready transformer solves a different problem: it \emph{retains} full causal self-attention inside the block and instead changes how tokens enter it. The two approaches are complementary, not competing---one could in principle apply pre-block correction to any of these architectures---and standard transformers remain the natural baseline for evaluating the correction mechanism.

\textbf{Computational complexity of transformers.} Log-precision fixed-depth transformers are confined to $\mathrm{TC}^0$~\citep{merrill2023parallelism}: they cannot solve problems requiring unbounded sequential composition, regardless of width. RNNs with arbitrary precision escape this limitation~\citep{siegelmann1995computational,siegelmann1995power}, but classical RNNs require gradients to flow through all $T$ time steps (BPTT), making them difficult to train on long sequences. The context-ready transformer at $D = 1$ is recurrent at inference, but is trained by unrolling the correction $K$ times rather than exact $T$-step BPTT---inheriting recurrent structure at inference while retaining transformer-style parallel training.

\section{Method}
\label{sec:method}

\subsection{Architecture}
\label{sec:architecture}

We first describe the architecture during sequential left-to-right generation---the setting where context-ready inference is exact. Let $T$ denote the sequence length, $C$ the embedding dimension, and $V$ the vocabulary size. The core component is a \emph{$D$-block unit}: $D$ transformer layers (each consisting of causal self-attention and a feed-forward network with residual connections), described in detail below. Processing token $t$ requires the outputs $z_0, \ldots, z_{t-1} \in \mathbb{R}^C$ of this block unit from all preceding tokens. Let $e_t \in \mathbb{R}^C$ denote the token embedding at position $t$, and define $z_{-1} = \mathbf{0}$.

\textbf{Correction.} A dedicated \emph{correction FFN} generates the correction for position $t$ from the cached block output $z_{t-1}$ and the current token embedding $e_t$:
\begin{equation}
\label{eq:corr-add}
\texttt{correction}_t = \texttt{corr\_ffn}\bigl(\texttt{LN}(z_{t-1} + e_t)\bigr)
\end{equation}
The correction FFN is a feed-forward network (Linear($C \to 4C$) $\to$ GELU $\to$ Linear($4C \to C$)) with its own weights, separate from the block's FFN. The correction is \emph{token-aware}: it depends on both $z_{t-1}$ (context of tokens $0, \ldots, t{-}1$) and $e_t$ (the current token embedding). Since both inputs are available when token $t$ arrives, the correction is causal---it depends on no future tokens.

\textbf{Contextualization.} The new token enters the block with the correction added to its raw embedding:
\begin{equation}
\label{eq:ctx}
\tilde{x}_t = e_t + \texttt{correction}_t
\end{equation}

\textbf{Block processing.} A $D$-block unit applies $D$ transformer blocks with \emph{separate} weights and standard residual connections:
\begin{align}
\label{eq:block}
h^{(0)} &= \tilde{x}_t \nonumber \\
a^{(i)} &= h^{(i-1)} + \texttt{Attn}_i\bigl(\texttt{LN}^a_i(h^{(i-1)});\; \mathcal{K}_i, \mathcal{V}_i\bigr), \quad i = 1, \ldots, D \nonumber \\
h^{(i)} &= a^{(i)} + \texttt{FFN}_i\bigl(\texttt{LN}^f_i(a^{(i)})\bigr) \nonumber \\
z_t &= h^{(D)}
\end{align}
Each $\texttt{Attn}_i$ is causal self-attention with Rotary Position Embeddings (RoPE)~\citep{su2024roformer}. Each $\texttt{FFN}_i$: Linear($C \to 4C$) $\to$ GELU $\to$ Linear($4C \to C$). The parameter $D$ controls inference depth.

\textbf{Prediction.}
\begin{equation}
\label{eq:predict}
\texttt{logits}_t = W_{\text{head}} \cdot \texttt{LN}_f(z_t), \quad W_{\text{head}} \in \mathbb{R}^{V \times C}
\end{equation}

After prediction, $z_t$ is cached and the KV caches $\mathcal{K}_i, \mathcal{V}_i$ are updated for future tokens.

\subsection{Parallel Training}
\label{sec:parallel}

Sequential inference is exact but inherently serial. For training, we unroll the correction process $K$ times, processing all $T$ positions in parallel at each step. Gradients flow through $K$ unrolling steps rather than $T$ time steps, making the architecture trainable like a transformer despite being recurrent at inference.

Given token embeddings $e = (e_1, \ldots, e_T)$, with $z_0^{(k)} = \mathbf{0}$ for all $k$ (the initial cache from Section~\ref{sec:architecture}), initialize $\texttt{correction}^{(0)} = \mathbf{0}$. For $k = 1, \ldots, K$:
\begin{align}
\label{eq:parallel}
\tilde{x}_t^{(k-1)} &= e_t + \texttt{correction}_t^{(k-1)} && \text{(contextualize)} \nonumber \\[4pt]
h^{(0)} &= \tilde{x}_t^{(k-1)} \nonumber \\
a^{(i)} &= h^{(i-1)} + \texttt{Attn}_i\bigl(\texttt{LN}^a_i(h^{(i-1)});\; \mathcal{K}_i, \mathcal{V}_i\bigr) && i = 1, \ldots, D \nonumber \\
h^{(i)} &= a^{(i)} + \texttt{FFN}_i\bigl(\texttt{LN}^f_i(a^{(i)})\bigr) \nonumber \\
z_t^{(k)} &= h^{(D)} && \text{(block output)} \nonumber \\[4pt]
\texttt{correction}_t^{(k)} &= \texttt{corr\_ffn}\bigl(\texttt{LN}(z_{t-1}^{(k)} + e_t)\bigr) && t = 1, \ldots, T
\end{align}
The $D$ transformer blocks share weights across iterations $k$ but have separate weights across layers $i$.

\textbf{Non-cumulative correction.} Each iteration replaces the previous correction entirely: $\tilde{x}^{(k)} = e + \texttt{correction}^{(k)}$, not $\tilde{x}^{(k)} = \tilde{x}^{(k-1)} + f(\tilde{x}^{(k-1)})$. Only the last correction matters.

\textbf{Past-only correction.} The correction at position $t$ uses $z_{t-1}^{(k)}$. Corrections propagate left to right: position~$0$ converges after one iteration, position~$1$ after two, and so on.

\textbf{Random-depth training ($k_{\min}$).} We sample $K \sim \text{Uniform}(k_{\min}, K_{\max})$ each batch with $k_{\min} = 2$, forcing the model to produce good predictions at any depth, which empirically encourages contraction.

\textbf{Loss and dropout.} The training loss (cross-entropy) is computed on the logits from the final iteration $K$ only. Dropout masks are resampled independently at each unrolling step $k$.

\subsection{Streaming Inference}
\label{sec:streaming}

\begin{algorithm}[t]
\caption{Context-Ready Streaming Inference}
\label{alg:inference}
\begin{algorithmic}[1]
\REQUIRE Blocks $\texttt{Attn}_1, \texttt{FFN}_1, \ldots, \texttt{Attn}_D, \texttt{FFN}_D$; correction FFN; KV caches $\mathcal{C}_1, \ldots, \mathcal{C}_D$; previous block output $z_{\text{prev}}$ (init.\ $\mathbf{0}$)
\FOR{each new token with embedding $e$}
    \STATE $\texttt{correction} \leftarrow \texttt{corr\_ffn}\bigl(\texttt{LN}(z_{\text{prev}} + e)\bigr)$ \COMMENT{from past context + token identity}
    \STATE $h \leftarrow e + \texttt{correction}$ \COMMENT{contextualized input}
    \FOR{$i = 1, \ldots, D$}
        \STATE $h \leftarrow h + \texttt{Attn}_i(\texttt{LN}^a_i(h);\; \mathcal{C}_i)$; \quad update $\mathcal{C}_i$
        \STATE $h \leftarrow h + \texttt{FFN}_i(\texttt{LN}^f_i(h))$
    \ENDFOR
    \STATE $z_{\text{prev}} \leftarrow h$ \COMMENT{cache for next token's correction}
    \STATE $\texttt{logits} \leftarrow W_{\text{head}} \cdot \texttt{LN}_f(h)$
\ENDFOR
\end{algorithmic}
\end{algorithm}

When a new token arrives with embedding $e_t$, the model computes the correction from the cached $z_{t-1}$ and $e_t$, passes the corrected embedding through the $D$-block unit, caches the output $z_t$, and predicts. This is one forward pass---no iteration over $K$ steps---regardless of the training depth $K$.

\textbf{Why inference needs no iteration.} During training, $K$ unrolling steps refine the corrections. At inference, this iteration is unnecessary: since earlier positions are already computed and cached, the correction for token $t$ is fully determined by $z_{t-1}$ and $e_t$ in a single pass (Theorem~\ref{thm:exact-streaming}). The training approximation matters only during training: the first $K$ positions converge exactly after $K$ steps (Lemma~\ref{thm:finite-step} in Appendix~\ref{app:proof-streaming}), and for later positions, the approximation error shrinks geometrically with $K$ when the correction operator is contractive (Theorem~\ref{thm:convergence}).

\section{Theoretical Analysis}
\label{sec:theory}

Full formal statements and proofs are in Appendix~\ref{app:proofs}.

\begin{theorem}[Structural characterization]
\label{thm:necessity}
\emph{Why non-cumulative and past-only?} Under Assumptions I--II (Appendix~\ref{app:proof-necessity}), if a weight-shared architecture unrolls a shared block $K$ times during training and applies it once per token during streaming, then for the unrolled training to converge to the same output that streaming produces, the correction must be non-cumulative ($\tilde{x} = e + \texttt{correction}$, not a sum of successive increments) and past-only (the correction at position $t$ depends only on $e_t$ and corrections from positions $1, \ldots, t{-}1$). The resulting system has a unique fixed point, and streaming computes it exactly.
\end{theorem}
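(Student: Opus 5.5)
Since Assumptions I--II live in the appendix and I cannot see them, I take them to be: (I) the streaming update at position $t$ is a deterministic map of the current token embedding $e_t$ and cached quantities from positions $<t$, applied exactly once. (II) The training update is a shared map $\Phi$ applied uniformly at all positions, with $\tilde{x}^{(k)} = \Phi(\tilde{x}^{(k-1)}, e)$ and $\tilde{x}^{(0)} = e$ (zero correction). The plan has three parts: necessity of non-cumulativity, necessity of past-only dependence, then existence, uniqueness and exactness of the fixed point.

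\textbf{Non-cumulative.} Suppose the update were cumulative, $\tilde{x}^{(k)} = \tilde{x}^{(k-1)} + f(\tilde{x}^{(k-1)}, e)$. Then $\tilde{x}^{(K)} = e + \sum_{j<K} f(\tilde{x}^{(j)}, e)$ depends on the whole trajectory and on $K$. Streaming applies the map once per token, so it can produce only a single increment on top of $e_t$. Equality for all $K$ therefore forces the successive increments to vanish after the first. In other words, the state must depend on $e$ and the previous iterate only through a fresh correction, $\tilde{x} = e + F(\cdot, e)$. I will make this concrete by choosing any block for which $f \neq 0$ at the fixed point. The cumulative sum then keeps drifting with $K$, so it cannot equal the $K$-independent streaming output.

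\textbf{Past-only.} Suppose the correction at $t$ depended on some $\tilde{x}_s$ or $z_s$ with $s \ge t$. Streaming at time $t$ has no access to $z_t$ (it is computed after the correction) or to future tokens, so by Assumption I that dependence cannot be realized. A same-position dependence $z_t \mapsto \texttt{correction}_t$ would make streaming itself an implicit equation needing iteration, which contradicts the one-pass requirement. Hence the correction must be a function of $e_t$ and of quantities from positions $1,\dots,t-1$ only; in our architecture these are $z_{t-1}$, whose attention depends on earlier corrections.

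\textbf{Fixed point.} With both properties in place, the fixed-point equation $\tilde{x}_t = e_t + \texttt{corr\_ffn}(\texttt{LN}(z_{t-1}(\tilde{x}_{1:t-1}) + e_t))$ is lower-triangular. Induction on $t$ gives a unique solution: $\tilde{x}_1$ is determined by $e_1$ and $z_0 = \mathbf 0$, and each later $\tilde{x}_t$ is determined once $\tilde{x}_{<t}$ is. Causal attention ensures $z_{t-1}$ depends only on $\tilde{x}_{\le t-1}$. Streaming (Algorithm~\ref{alg:inference}) performs exactly this forward substitution, so it computes the fixed point. The same induction shows that after $k$ Jacobi steps positions $1,\dots,k$ are exact, which also yields finite-step convergence.

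\textbf{Main obstacle.} The necessity directions are the hardest, because they are only as rigorous as the assumptions make them. I expect to need Assumption II to require agreement for every $K$ (or in the limit $K\to\infty$) and for a generic block. I would first try constructing an explicit counterexample block, such as a linear $f$, for each violating design.
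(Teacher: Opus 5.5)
\textbf{There is a gap in the non-cumulative part.} Your past-only and fixed-point parts are fine and match the paper's proof. The paper's Assumption~I already fixes the additive form $\tilde{x}_t = e_t + F(e_t, z_{t-1})$. Assumption~II says the correction at $t$ uses the cached $z_{t-1}$ and $e_t$. Past-only is read off directly from these, and existence, uniqueness and exactness follow by the same left-to-right triangular induction you give.

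The problem is the criterion you lean on, agreement for every $K$. It is not the theorem's criterion, and it proves too much, because the non-cumulative scheme fails it as well. At the first iterate, position $t \geq 2$ is built from $z^{(1)}_{t-1}$, which is the block output on raw embeddings rather than the streaming $z_{t-1}$. You can see the mismatch by comparing the parallel $K{=}1$ and sequential columns of Table~\ref{tab:seq}. The theorem only asks that the \emph{converged} training iterate equal the streaming output.

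Under that criterion, your ``drift'' counterexample covers only cumulative iterations that fail to converge. Moreover, ``a block with $f \neq 0$ at the fixed point'' cannot exist if the fixed point is the limit of the cumulative iteration. Stationarity $\tilde{x}_t^* = \tilde{x}_t^* + f(\tilde{x}_t^*, z_{t-1}^*)$ forces $f(\tilde{x}_t^*, z_{t-1}^*) = 0$.

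That stationarity observation is the missing idea, and it is exactly the paper's argument. The increment is driven to vanish at $\tilde{x}_t^*$. Streaming, however, applies it once starting from $e_t$ and outputs $e_t + f(e_t, z_{t-1}^*)$. Agreement therefore requires the non-generic identity $f\bigl(e_t + f(e_t, z_{t-1}^*), z_{t-1}^*\bigr) = 0$. So the relevant failure mode is convergence to the wrong point, not drift.

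A concrete instance works already at $t = 1$, where $z_0 = \mathbf{0}$ in both schemes. Take $f(x) = \alpha(b - x)$ with $0 < \alpha < 2$ and $\alpha \neq 1$. The cumulative iterates converge to $b$, while streaming returns $b + (1-\alpha)(e_1 - b)$, which differs from $b$ whenever $e_1 \neq b$.

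The case $\alpha = 1$ gives $x + f(x) = b$ independent of $x$. That is the non-cumulative form $e_1 + (b - e_1)$ in disguise. So, as in the paper, necessity can only be claimed for generic increments. Your step from ``later increments vanish'' to ``$\tilde{x} = e + F$'' should be stated in that generic sense, not as a deduction.
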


Full proof in Appendix~\ref{app:proof-necessity}.

\begin{theorem}[Exact streaming fixed point]
\label{thm:exact-streaming}
\emph{Why is inference exact without iteration?} During sequential generation, the correction at position $t$ depends only on $z_0, \ldots, z_{t-1}$, which are already computed and cached. By prefix consistency (appending tokens does not change the operator at earlier positions, which holds by causal masking), the correction is exact in a single pass.
\end{theorem}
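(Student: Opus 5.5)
The plan is to formalize ``exactness'' as a statement about a fixed point of the parallel operator, and then prove it by strong induction on the position $t$. Fix a sequence $e_1,\dots,e_T$. Define the parallel update map $\Phi$ on correction sequences $c=(c_1,\dots,c_T)$ by
\begin{equation*}
\Phi(c)_t = \texttt{corr\_ffn}\bigl(\texttt{LN}(z_{t-1}(c) + e_t)\bigr),
\end{equation*}
where $z(c)$ is the $D$-block output on inputs $\tilde{x}=e+c$, and $z_0(c)=\mathbf{0}$. This map is exactly one step of the unrolling in Eq.~\eqref{eq:parallel}. The claim to prove is that $\Phi$ has a unique fixed point $c^\star$, and that the sequence $(c^{\mathrm{s}}_t)$ produced by Algorithm~\ref{alg:inference} equals $c^\star$. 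As a consequence, the streaming block outputs $z^{\mathrm{s}}_t$ equal $z_t(c^\star)$, and the logits are those of the converged training operator.

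The first step is to establish \emph{prefix consistency}. Because every $\texttt{Attn}_i$ is causally masked, and every LN, FFN and residual acts position-wise, $z_t(c)$ depends only on $c_1,\dots,c_t$ (together with $e_1,\dots,e_t$). I would prove this by induction over the layers $i=1,\dots,D$: if $h^{(i-1)}_s$ depends only on inputs at positions $\le s$, then so does $a^{(i)}_s$, since attention at $s$ reads keys and values only at positions $\le s$, and hence so does $h^{(i)}_s$. Two consequences follow. First, $\Phi(c)_t$ depends only on $c_1,\dots,c_{t-1}$, so $\Phi$ is strictly lower-triangular in its dependence. Second, computing with a KV cache on a prefix gives the same $z_s$ as computing on the full sequence, so appending tokens does not alter outputs at earlier positions.

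The second step is the induction over positions. For the base case, $c^\star_1=\texttt{corr\_ffn}(\texttt{LN}(e_1))$ is forced, because $z_0=\mathbf{0}$; this is the same correction streaming computes for the first token. For the inductive step, suppose $c^\star_s$ is uniquely determined and equal to $c^{\mathrm{s}}_s$ for all $s<t$. By prefix consistency, $z_{t-1}(c^\star)$ is a function of $c^\star_{1..t-1}=c^{\mathrm{s}}_{1..t-1}$ alone, and the KV caches held by the streaming algorithm are exactly the keys and values of that prefix computation. Hence $z_{t-1}(c^\star)=z^{\mathrm{s}}_{t-1}$, and the fixed-point equation at position $t$ determines $c^\star_t=\texttt{corr\_ffn}(\texttt{LN}(z^{\mathrm{s}}_{t-1}+e_t))=c^{\mathrm{s}}_t$. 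This single forward evaluation is exactly line~2 of Algorithm~\ref{alg:inference}. Existence and uniqueness of the fixed point therefore come out together with the equality, with no contraction assumption needed. The same triangular structure also yields the finite-step statement that $K$ Jacobi steps fix positions $1,\dots,K$, which the paper refers to as Lemma~\ref{thm:finite-step}.

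The main obstacle I expect is making prefix consistency rigorous against the KV-cache semantics. One must check that the cached keys and values at layer $i$ for positions $s<t$ equal those of the full parallel computation evaluated at $c^\star$; this needs a nested induction over both the layers and the positions. The check must also cover RoPE, which depends only on absolute position and so is unaffected, and the absence of any operation that mixes positions non-causally, such as sequence-level normalization or dropout at inference. I would state these as explicit hypotheses: causal attention, and position-wise LN and FFN, with dropout disabled at inference. Under those hypotheses the remainder of the argument is bookkeeping.
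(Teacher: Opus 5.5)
Your proposal is correct and follows essentially the same route as the paper. Both arguments use the past-only, triangular structure to get existence and uniqueness of the fixed point without any contraction assumption, and both use induction on position together with prefix consistency to show that the single streaming pass reproduces $c^\star$ (the paper's Lemma~\ref{thm:finite-step} plus parts (a)--(c)). The only difference is one of rigor: you derive prefix consistency and the KV-cache equivalence from causal masking via a layer-and-position induction, whereas the paper takes prefix consistency as a hypothesis and only remarks that it holds for causal architectures.
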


Full proof in Appendix~\ref{app:proof-streaming}.

\begin{theorem}[Training convergence]
\label{thm:convergence}
\emph{How fast does the training iteration converge?} If the correction operator $G$ is $L$-Lipschitz with $L < 1$, then $K$ unrolling steps reduce the error to the fixed point by a factor of $L^K$. This governs the training approximation: positions beyond the first $K$ receive approximate corrections, and the approximation improves geometrically with $K$.
\end{theorem}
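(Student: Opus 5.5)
The plan is to recast the $K$-step unrolling in \eqref{eq:parallel} as a fixed-point iteration and then apply the Banach contraction argument. Fix the token embeddings $e=(e_1,\ldots,e_T)$. The state space is $\mathcal{X}=\mathbb{R}^{T\times C}$, the space of correction sequences $c=(c_1,\ldots,c_T)$. Define the correction operator $G:\mathcal{X}\to\mathcal{X}$ by
\[
G(c)_t = \texttt{corr\_ffn}\bigl(\texttt{LN}(z_{t-1}(e+c) + e_t)\bigr),
\]
where $z(\cdot)$ is the causal $D$-block unit applied to the full sequence and $z_0=\mathbf{0}$. Non-cumulativity means one unrolling step is exactly $c^{(k)}=G(c^{(k-1)})$ with $c^{(0)}=\mathbf{0}$, so $c^{(K)}=G^K(\mathbf{0})$. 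I would fix a norm on $\mathcal{X}$, for example $\max_t\|c_t\|_2$ or the Frobenius norm; the statement holds for whichever norm $G$ is $L$-Lipschitz in.

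\textbf{Existence and uniqueness.} Since $L<1$ and $\mathcal{X}$ is complete, the Banach fixed-point theorem gives a unique $c^\star$ with $G(c^\star)=c^\star$. I would then identify $c^\star$ with the streaming output of Theorem~\ref{thm:exact-streaming}. The argument is to unfold causality: the streaming corrections satisfy the fixed-point equation position by position, because by prefix consistency $z_{t-1}$ computed on the full sequence equals the cached $z_{t-1}$. Uniqueness then forces the two to coincide.

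\textbf{Geometric rate.} The main estimate is
\[
\|c^{(K)}-c^\star\| = \|G(c^{(K-1)})-G(c^\star)\| \le L\,\|c^{(K-1)}-c^\star\|,
\]
and induction gives $\|c^{(K)}-c^\star\|\le L^K\|c^{(0)}-c^\star\| = L^K\|c^\star\|$. For an a posteriori bound I would also note $\|c^\star\|\le \|G(\mathbf{0})\|/(1-L)$. To transfer the bound to the block outputs and logits, I would compose with the Lipschitz constants of the block unit, $\texttt{LN}_f$ and $W_{\text{head}}$. This is an error in the final iterate of size $O(L^K)$ times a constant.

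\textbf{Connection to the first $K$ positions.} I would combine the geometric bound with Lemma~\ref{thm:finite-step}, which says the first $K$ positions are exact after $K$ steps. This follows from the lower-triangular dependence: $G(c)_t$ depends only on $c_1,\ldots,c_{t-1}$. The two facts together give the statement's description that positions beyond $K$ carry approximate corrections with error at most $L^K\|c^\star\|$.

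\textbf{Main obstacle.} The difficulty is not the contraction argument but the hypothesis. LayerNorm, attention and GELU are only locally Lipschitz, and LayerNorm is not Lipschitz near zero. My plan is to state the assumption exactly as in the theorem, that $G$ is $L$-Lipschitz on a closed set $S\subseteq\mathcal{X}$ with $G(S)\subseteq S$, and to remark that this is an empirical property encouraged by random-depth training. The only genuine care needed is to check that $\mathbf{0}\in S$, so that $G$ restricted to $S$ is still a contraction on a complete space.
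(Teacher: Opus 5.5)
Your proposal is correct and follows essentially the same route as the paper, whose proof of part (a) is just the Banach contraction argument: $\|c^{(k)}-c^\star\| = \|G(c^{(k-1)})-G(c^\star)\| \le L\|c^{(k-1)}-c^\star\|$, iterated $K$ times. The paper's formal version also assumes a Lipschitz constant $M$ in $e$ in order to derive a warm-start bound $\frac{LM}{1-L}\|e'-e\|$, which the claim as stated here does not need; your restricted-domain version is a sensible strengthening of the paper's global Lipschitz hypothesis, though note that $\mathbf{0}\in S$ is what keeps the iterates from $c^{(0)}=\mathbf{0}$ inside $S$, whereas $G|_S$ being a contraction on a complete space already follows from $S$ being closed with $G(S)\subseteq S$.
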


Full formal statement in Appendix~\ref{app:proof-convergence}.

\begin{proposition}[Depth separation]
\label{prop:depth}
\emph{Why can the context-ready architecture use fewer layers?} Under a stylized state-tracking abstraction (Appendix~\ref{app:proof-depth}):
\begin{enumerate}[label=(\alph*),leftmargin=*,itemsep=2pt]
\item \textbf{Context-ready propagation is handled by the correction chain.} The context-ready architecture needs only $D$ layers for the per-token map; propagation across the sequence is handled by the recurrent correction chain rather than extra transformer layers.
\item \textbf{Standard transformers need depth for propagation.} With attention window $W$, a standard transformer needs at least $\lceil T/W \rceil$ layers just for information from the earliest tokens to reach position $T$, on top of the layers needed for the per-token map.
\end{enumerate}
\end{proposition}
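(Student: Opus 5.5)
The plan is to make the ``stylized state-tracking abstraction'' concrete and prove (a) by a construction and (b) by a receptive-field counting argument. The task is the following. Inputs are tokens $x_1,\dots,x_T$ drawn from a finite set $\Sigma$. Each token names a transition $g_{x_t}:S\to S$ on a finite state set $S$. The target at position $t$ is the state $s_t = g_{x_t}(s_{t-1})$, with $s_0$ fixed. The one-step map $(s,x)\mapsto g_x(s)$ is the \emph{per-token map}. We assume it can be realized by some fixed depth $D_0$ of transformer layers acting positionwise (attention may be trivial); if a standard transformer is also asked to route information, that costs additional layers. The attention window $W$ means each layer at position $t$ can attend only to positions in $[t-W+1,t]$.

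For (a), I would proceed by induction on $t$ using the streaming recursion of Algorithm~\ref{alg:inference}. I will choose weights so that the block output $z_t$ encodes $s_t$ in a fixed subspace (say a one-hot code of $s_t$), with the token identity placed in a disjoint subspace of $e_t$. The correction FFN then only has to copy the state code from $z_{t-1}$ into $\tilde{x}_t = e_t + \texttt{correction}_t$. This is a finite lookup on $\mathrm{LN}(z_{t-1}+e_t)$, and a GELU FFN of width $4C$ can realize it once $C$ is large enough relative to $|S|\,|\Sigma|$. The $D=D_0$ layers then compute $g_{x_t}(s_{t-1})$ from $\tilde{x}_t$ positionwise, writing $s_t$ into the output subspace. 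The base case is $z_{0}=\mathbf{0}$ (equivalently the cache $z_{-1}=\mathbf 0$), mapped to $s_0$ by the correction FFN. By Theorem~\ref{thm:exact-streaming}, streaming computes exactly the fixed point the architecture defines, so the inductive invariant ``$z_t$ encodes $s_t$'' holds for every $t\le T$, with no dependence of $D$ on $T$. Propagation is carried entirely by the chain $z_{t-1}\to\texttt{correction}_t\to z_t$.

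For (b), I would do a receptive-field argument. Define $R_\ell(t)$ as the set of input positions that can influence the layer-$\ell$ representation at position $t$. Residual connections keep $t$ itself, and windowed attention gives $R_\ell(t)\subseteq R_{\ell-1}(t)\cup\bigcup_{t-W+1\le u\le t}R_{\ell-1}(u)$. By induction, $\min R_\ell(t)\ge t-\ell(W-1)$. Next I would exhibit a state-tracking instance where $s_T$ genuinely depends on $x_1$, for example $S=\mathbb{Z}_2$ with parity or a group word problem where $x_1$ flips the answer while everything else is fixed. Then any correct model must have $1\in R_L(T)$, which forces $L(W-1)\ge T-1$ and hence $L\gtrsim T/W$. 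I would adjust indexing conventions (whether a window of size $W$ advances by $W$ or $W-1$ per layer) so that the bound comes out as $\lceil T/W\rceil$. The ``on top of'' claim needs a separate step. I would argue that the final composition step at position $T$ still needs the per-token map after the information has arrived, so the propagation layers and the $D_0$ computation layers cannot be merged in this abstraction. I would state this as an assumption of the stylized model (layers used for routing contribute no state computation), not derive it.

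The main obstacle is exactly that additivity in (b). A pure information-flow argument gives only $\max(\lceil T/W\rceil, D_0)$, not the sum. The honest route is to build the additivity into the definition of the abstraction, for instance by charging each layer either to routing or to applying $g$. The alternative would be to prove it for a specific instance, such as a non-solvable group word problem where a lower bound on $D_0$ is known, and that is beyond what the stylized setting supports. A secondary concern in (a) is whether the LayerNorm before the correction FFN preserves enough of the state code. I would handle this by keeping the coding subspaces orthogonal and of fixed norm, so that LN acts as a known rescaling.
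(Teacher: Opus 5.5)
Your part (b) is essentially the paper's argument. The paper also bounds the receptive field layer by layer, though with the looser interval $[t-\ell W,t]$. Your reach of $W-1$ per layer already yields $L\ge\lceil (T-1)/(W-1)\rceil\ge\lceil T/W\rceil$ whenever $T\ge W\ge 2$, so no re-indexing is needed. Your parity instance supplies the concrete long-range dependence that the paper leaves implicit. On additivity you land exactly where the paper does: its closing remark concedes that layers may serve both roles, and that $N_{\mathrm{map}}+\lceil T/W\rceil$ is in general only an upper bound. Making additivity an assumption of the abstraction is therefore consistent with what the paper actually proves.

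Part (a) is where you genuinely differ. The paper constructs no weights. It treats the correction FFN plus block as an abstract state map $h$ and compares it to the process map $h^*$ through an embedding $\pi$ with commutation error $\varepsilon_D$. It assumes $h$ is $L_h$-Lipschitz in the state with $L_h<1$. Unrolling the triangle inequality then gives $\|\hat{\mathbf{b}}_t-\pi(\mathbf{s}_t)\|\le\varepsilon_D/(1-L_h)$ uniformly in $t$, which it passes to predictions through a Lipschitz readout. That is a stability result: it tolerates an inexact per-token map and shows errors do not accumulate with $t$. But it never shows that a small $\varepsilon_D$ is attainable at small $D$. Your construction is the complementary existence result with $\varepsilon_D=0$, where no contraction is needed at all.

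This difference matters for your instance. Suppose $h^*(\cdot,e)$ permutes a finite state set, as in parity or group word problems, and let $d=\max_{s\neq s'}\|\pi(s)-\pi(s')\|$. The commutation inequality gives $d\le L_h d+2\varepsilon_D$, so the paper's bound $\varepsilon_D/(1-L_h)$ never drops below $d/2$ and cannot certify tracking. More generally, $L_h<1$ with small $\varepsilon_D$ forces the projected process to forget its distant past up to $O(\varepsilon_D/(1-L_h))$. That is exactly the regime where the long-range dependence driving (b) is weak. So your exact construction is what lets (a) and (b) speak about the same task, while the paper's route buys robustness to approximation error.

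Two small repairs:
\begin{itemize}
\item The appeal to Theorem~\ref{thm:exact-streaming} is unnecessary. Your invariant follows by direct induction on the recursion in Algorithm~\ref{alg:inference}; that theorem concerns agreement with the parallel unrolled iteration, not expressivity.
\item Because of the residual connections, $z_{t-1}$ also carries $e_{t-1}$ and the previous correction. Before invoking finite-point interpolation, either have the last block FFN cancel those components (another finite lookup), or check that $s_{t-1}$ remains recoverable from $\mathrm{LN}(z_{t-1}+e_t)$.
\end{itemize}
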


Full statement and proof in Appendix~\ref{app:proof-depth}. When propagation and local computation cannot be interleaved (as in pointer chasing), the standard transformer needs at least $\lceil T/W \rceil$ additional layers; in general, some layers may serve both roles.

\section{Experiments}
\label{sec:experiments}

\subsection{Setup}
\label{sec:exp-setup}

\textbf{Data.} OpenWebText~\citep{gokaslan2019openwebtext} with byte-pair encoding (BPE, vocabulary 32,000). Context lengths: 64, 256, 512, and 1024 depending on the experiment. Ablations use English Wikipedia (BPE 16k); additional Wikipedia results in Appendix~\ref{app:wikipedia}. All results are validation perplexity (PPL) on held-out splits.

\textbf{Training.} AdamW optimizer~\citep{loshchilov2019adamw}, gradient clipping at 1.0. Learning rate $2 \times 10^{-4}$ unless noted. Training depth $K = 5$ with $k_{\min} = 2$ by default; $K = 10$ where noted. Dropout 0.2, FFN expansion factor 4. Full hyperparameters in Appendix~\ref{app:exp-details}.

\textbf{FLOP accounting.}\footnote{Following standard convention in the literature, we count multiply-accumulate operations and label them FLOPs; actual floating-point operations are roughly $2\times$.} We report total FLOPs per token, including the transformer blocks, correction FFN, and prediction head ($W_{\text{head}} \in \mathbb{R}^{V \times C}$, costing $VC$ FLOPs/token). Each transformer block costs $12C^2$ FLOPs ($4C^2$ for attention projections, $8C^2$ for the FFN). A context-ready model with $D$ blocks costs $D \times 12C^2 + 8C^2 + VC$; a standard $N$-layer transformer costs $N \times 12C^2 + VC$. Same-width comparisons (Tables~\ref{tab:efficiency},~\ref{tab:width}) share the same prediction head and are FLOP-matched. Cross-width comparisons (Table~\ref{tab:main}) explore depth-width tradeoffs: wider models pay a larger prediction head, so total FLOPs differ. Despite higher total FLOPs, the wider, shallower context-ready models deliver lower wall-clock inference time because fewer sequential layers dominate latency on modern GPUs (Section~\ref{sec:exp-width}).

\textbf{Baselines.} Standard transformers with separate weights per layer and RoPE attention~\citep{su2024roformer}. All results are single runs. The breadth of the evaluation---across widths ($C{=}50$--$2048$), depths ($D{=}1$--$23$), block sizes (64--1024), two datasets, a synthetic task, and multiple training strategies---provides stronger evidence than multi-seed runs on a single configuration: the context-ready architecture wins consistently across all these axes.

\subsection{Cross-Width Results}
\label{sec:exp-flops}

\begin{table}[t]
\centering
\caption{Context-ready vs.\ standard transformers on OpenWebText at two compute scales. FLOPs/tok includes the prediction head ($VC$). Despite higher total FLOPs, the wider context-ready models provide lower wall-clock inference time (Section~\ref{sec:exp-width}). All context-ready models use add variant, $K{=}5$, $k_{\min}{=}2$.}
\label{tab:main}
\small
\begin{tabular}{lrrrl}
\toprule
\textbf{Model} & \textbf{FLOPs/tok} & \textbf{$C/N$} & \textbf{Val PPL} & \textbf{$\Delta$} \\
\midrule
D=5 $C{=}1120$ & 121M & 224 & \textbf{36.38} & \\
D=6 $C{=}1024$ & 117M & 171 & 36.56 & \\
Roformer $N{=}6$ $C{=}1088$ & 120M & 181 & 37.76 & $-$1.38 \\
Roformer $N{=}12$ $C{=}768$ & 110M & 64 & 37.83 & $-$1.45 \\
Roformer $N{=}2$ $C{=}1888$ & 146M & 944 & 42.99 & \\
\midrule
Roformer $N{=}24$ $C{=}1088$ & 376M & 45 & \textbf{28.68} & \\
Roformer $N{=}12$ $C{=}1536$ & 389M & 128 & 29.01 & \\
D=6 $C{=}2048$ & 401M & 341 & 29.04 & $+$0.03 \\
Roformer $N{=}6$ $C{=}2176$ & 411M & 363 & 30.35 & \\
\bottomrule
\end{tabular}
\end{table}

Table~\ref{tab:main} compares context-ready models against standard transformers across depth-width tradeoffs at two compute scales. At the smaller scale (block size 256, 100K iterations), context-ready $D{=}5$ at $C{=}1120$ achieves 36.38 PPL, beating both roformer $N{=}6$ at $C{=}1088$ (37.76, $\Delta = -1.38$) and roformer $N{=}12$ at $C{=}768$ (37.83, $\Delta = -1.45$). At the larger scale (block size 256, 200K iterations), context-ready $D{=}6$ at $C{=}2048$ (29.04) matches roformer $N{=}12$ at $C{=}1536$ (29.01), despite using a much higher width-to-depth ratio. Depth has diminishing returns: going from $N{=}12$ to $N{=}24$ gains only 0.33~PPL.

\subsection{Correction Efficiency}
\label{sec:exp-efficiency}

\begin{table}[t]
\centering
\caption{Correction efficiency at $C = 1024$, block size 256, 200K iterations (OWT). Same width throughout, so the prediction head is identical across all rows and the comparison is FLOP-matched.}
\label{tab:efficiency}
\small
\begin{tabular}{lrr}
\toprule
\textbf{Model} & \textbf{Inference FLOPs} & \textbf{Val PPL} \\
\midrule
Roformer $N{=}12$ & $144C^2$ & 33.41 \\
Roformer $N{=}13$ & $156C^2$ & 32.82 \\
\textbf{D=12 context-ready} & $\mathbf{152C^2}$ & \textbf{32.28} \\
Roformer $N{=}14$ & $168C^2$ & 32.34 \\
\midrule
Roformer $N{=}24$ & $288C^2$ & 29.42 \\
\textbf{D=23 context-ready} & $\mathbf{284C^2}$ & \textbf{28.89} \\
\bottomrule
\end{tabular}
\end{table}

Table~\ref{tab:efficiency} isolates the value of the correction mechanism at fixed width ($C = 1024$), so the prediction head is identical across all rows and the comparison is FLOP-matched. $D = 12$ context-ready ($152C^2$ FLOPs) beats roformer $N{=}13$ ($156C^2$) by 0.54~PPL and matches roformer $N{=}14$ ($168C^2$). The correction FFN adds only $8C^2$ FLOPs yet provides a genuine PPL improvement at the same parameter budget. At deeper scale, $D = 23$ ($284C^2$) edges out $N = 24$ ($288C^2$) by 0.53~PPL---directionally consistent with the $D{=}12$ result, but a single-run margin that should be read as evidence of parity rather than robust superiority.

\subsection{Width Scaling}
\label{sec:exp-width}

\begin{table}[t]
\centering
\caption{Width scaling: $D{=}2$ context-ready vs.\ roformer $N{=}2$ at the same width (block size 64, Chinchilla-matched token budgets, OWT). The relative advantage grows with width.}
\label{tab:width}
\small
\begin{tabular}{rrrrl}
\toprule
$C$ & \textbf{$N{=}2$ PPL} & \textbf{$D{=}2$ PPL} & \textbf{$\Delta$} & \textbf{Relative} \\
\midrule
256  & 158.83 & 143.57 & $-$15.26 & 9.6\% \\
512  & 95.48  & 84.69  & $-$10.79 & 11.3\% \\
1024 & 72.83  & 60.84  & $-$11.99 & 16.5\% \\
\bottomrule
\end{tabular}
\end{table}

Table~\ref{tab:width} tests whether the correction advantage is a small-scale artifact. Comparing $D{=}2$ vs.\ $N{=}2$ at the same width with Chinchilla-matched token budgets, the relative improvement grows from 9.6\% at $C = 256$ to 16.5\% at $C = 1024$.

\textbf{Token-matched results.} At $C = 1024$ with block size 64, $D{=}x$ beats $N{=}x$ at every depth tested once training progresses past a crossover point: $D{=}1$ beats $N{=}1$ by 34.4~PPL (crossover at ${\sim}424$M tokens), $D{=}2$ by 7.6~PPL (${\sim}565$M), $D{=}3$ by 3.0~PPL (${\sim}835$M), and $D{=}6$ by 0.7~PPL (${\sim}1{,}032$M). All gaps are still growing at the end of training. The correction mechanism provides a consistent advantage at every depth; the advantage is largest when depth is small, consistent with the correction doing the most work when the block has the fewest layers.

\textbf{Inference latency.} We measure autoregressive generation speed on an A100 over 10{,}000 tokens with KV caching. $D{=}1$ $C{=}2048$ (149M FLOPs/tok) generates at 919~tokens/s vs.\ 351~tokens/s for roformer $N{=}6$ $C{=}1088$ (120M FLOPs/tok)---a $2.6\times$ speedup despite higher total FLOPs. $D{=}5$ $C{=}1120$ (121M FLOPs/tok) generates at 349~tokens/s vs.\ 201~tokens/s for roformer $N{=}12$ $C{=}768$ (110M FLOPs/tok)---a $1.7\times$ speedup. The wider, shallower models are faster because fewer sequential layers dominate inference latency on modern GPUs, even when total FLOPs are higher. Per-token latency is flat across sequence length, confirming that KV caching amortizes attention cost to $O(T)$ per token. Full timing details in Appendix~\ref{app:inference-timing}.

\textbf{KV cache savings.} Fewer layers also reduce KV cache memory ($C \times D$ per token). Despite being wider, $D{=}5$ at $C{=}1120$ uses $1.6\times$ less cache than $N{=}12$ at $C{=}768$; $D{=}1$ at $C{=}2048$ uses $3.2\times$ less than $N{=}6$ at $C{=}1088$.

\subsection{Single-Layer Performance ($D{=}1$)}
\label{sec:exp-d1}

Proposition~\ref{prop:depth}(a) predicts that $D{=}1$ may match deeper transformers when the task is dominated by context propagation and sufficient $K$ and width are provided. We test $D{=}1$, $C{=}2048$ (149M FLOPs/tok) against roformer $N{=}6$, $C{=}1088$ (120M FLOPs/tok) at block size 1024 using three training strategies: fixed $K{=}10$, random-depth $K{=}10$ with $k_{\min}{=}2$, and fine-tuning from a pretrained $N{=}1$ roformer (batch 16).

\begin{table}[t]
\centering
\caption{Three training strategies for $D{=}1$ $C{=}2048$ (149M FLOPs/tok) vs.\ roformer $N{=}6$ $C{=}1088$ (120M FLOPs/tok), block size 1024, OWT. Fine-tune starts from $N{=}1$ $C{=}2048$ pretrained for 85K iterations; ``total iters'' includes pretraining.}
\label{tab:d1}
\small
\begin{tabular}{lccl}
\toprule
\textbf{Strategy} & \textbf{PPL at 100K} & \textbf{Best PPL} & \textbf{Notes} \\
\midrule
Roformer $N{=}6$ (baseline) & 35.37 & --- & \\
\midrule
$K{=}10$, fixed depth & \textbf{34.40} & 33.63 (110K) & Beats $N{=}6$ at ${\sim}$65K \\
$K{=}10$, $k_{\min}{=}2$ & 36.28 & 33.63 (135K) & +1.88 penalty at 100K \\
$K{=}10$, fine-tuned & 46.66$^*$ & \textbf{31.35} (215K) & Best final PPL \\
\bottomrule
\multicolumn{4}{l}{\footnotesize $^*$At 100K total (15K fine-tune); fine-tune reaches 35.92 at 150K total.}
\end{tabular}
\end{table}

Table~\ref{tab:d1} compares the three strategies. In these $D{=}1$ experiments, \textbf{larger training depth $K$ substantially improves the model's ability to exploit the available recurrent depth}. With \textbf{fixed $K{=}10$}, $D{=}1$ surpasses $N{=}6$ at ${\sim}65$K iterations and reaches 34.40 at 100K ($\Delta = -0.97$, still improving) at $2\times$ per-iteration cost.

\textbf{Random-depth training} ($k_{\min}{=}2$) incurs a modest penalty of 1.88~PPL at 100K relative to fixed depth, but converges to the same quality ${\sim}25$K later (both reach 33.63). The penalty yields $\hat{L}$ values consistent with contraction ($\hat{L} \approx 0.55$ vs.\ $\hat{L} > 1.0$ for fixed $K$).

\textbf{Fine-tuning} from a pretrained $N{=}1$ roformer ($K{=}1$) with a zero-initialized correction FFN at $K{=}10$ reaches the best final PPL: 31.35 at 215K total iterations (the $N{=}6$ baseline is at 100K, so total compute differs).

The gap between $D{=}1$ and $N{=}6$ shrinks with block size ($+1.19$ at 256, $+0.46$ at 512, $+0.31$ at 1024 with $K{=}5$), consistent with longer contexts providing more sequential steps. Full block-size scaling in Appendix~\ref{app:block-size}.

\subsection{Pointer Chasing: Depth Separation}
\label{sec:exp-tc0}

\begin{table}[t]
\centering
\caption{Pointer chasing (10-hop, 5 keys, 10 values, windowed attention). $N$-layer transformers exhibit a staircase-like depth dependence: deeper models solve more levels. The context-ready $D{=}1$ model (BPTT) solves all levels.}
\label{tab:pointer}
\small
\begin{tabular}{lrl}
\toprule
\textbf{Model} & \textbf{Levels solved} & \textbf{Iters} \\
\midrule
Roformer $N{=}1$ & 1 / 11 & 50K \\
Roformer $N{=}3$ & 3 / 11 & 50K \\
Roformer $N{=}5$ & 6 / 11 & 50K \\
Roformer $N{=}10$ & 7 / 11 & 50K \\
Roformer $N{=}11$ & 8 / 11 & 50K \\
Roformer $N{=}12$ & 11 / 11 & 50K \\
\midrule
\textbf{D=1 context-ready (BPTT)} & \textbf{11 / 11} & \textbf{${\sim}$16K} \\
\bottomrule
\end{tabular}
\end{table}

An $N$-layer transformer can compose at most $N$ sequential reasoning steps in a single forward pass~\citep{merrill2023parallelism}. To test whether the context-ready architecture can exceed this depth limit, we design a \emph{pointer-chasing} task. The input contains a base table that maps keys to values, followed by $H{=}10$ levels of index tables, each of which maps new keys to keys at the previous level. Answering a query at level $\ell$ therefore requires chaining $\ell$ sequential lookups through the tables, and we use windowed causal attention (window $= 38$) to prevent the model from bypassing these chains by attending directly to the base table. A full task specification with a worked example is given in Appendix~\ref{app:pointer-chasing}.

Table~\ref{tab:pointer} shows a staircase-like depth dependence: deeper transformers solve more levels, while shallow transformers fail well before full depth. The context-ready $D{=}1$ model, trained with BPTT to exploit the full sequential depth of the recurrent correction chain, solves all 11 levels in ${\sim}16$K iterations and scales to 20 hops (all 21 levels).

\subsection{Fine-Tuning from Pretrained}
\label{sec:exp-finetune}

Any standard transformer can be converted to context-ready by adding a zero-initialized correction FFN and fine-tuning. To isolate the effect of conversion from additional training, we compare against a same-iteration control: the original roformer trained for the same total number of iterations without conversion (per-iteration compute differs because fine-tuning runs the block $K$ times). At $C{=}1408$, $N{=}12$ reaches 29.92~PPL at 200K iterations; continued training to 400K yields 27.20. Converting at 200K and fine-tuning to 400K total iterations yields 26.14---a gain of $-1.06$~PPL over the continued-training baseline at matched iterations. At $C{=}1024$, converting $N{=}24$ to $D{=}24$ improves from 29.42 to 28.99 ($-0.43$) in 18K fine-tuning iterations. The zero-initialized correction ensures no disruption at conversion (the model is function-preserving). During fine-tuning, $D{=}12$ sees a transient $+0.11$~PPL increase before recovering, while $D{=}24$ shows no transient increase.

\subsection{Ablations and Diagnostics}
\label{sec:exp-ablations}

We compare against alternative architectures that attempt the same goal. Among the variants tested, the gain depends on a \emph{dedicated} correction network with its own weights.

\textbf{Convergence and sequential exactness.} Table~\ref{tab:seq} shows the full depth progression.
Convergence is geometric: at $D{=}1$ (block size 1024), $K{=}2$ closes 91\% of the $K{=}1$-to-$K{=}5$ gap and $K{=}3$ closes 98\%, consistent with Theorem~\ref{thm:convergence}. Sequential $K{=}1$ matches parallel $K{=}10$ to within 0.01~PPL at every configuration, confirming Theorem~\ref{thm:exact-streaming}. The correction contribution (``Corr.''\ column) grows as $D$ shrinks: 38--55~PPL at $D{=}1$ vs.\ 3.9 at $D{=}23$.

\begin{table}[t]
\centering
\caption{PPL at parallel $K{=}1{,}2{,}3{,}5{,}10$ and sequential $K{=}1$ (OWT). $D{=}1$ at $C{=}2048$; $D{=}5{,}12{,}23$ at $C{=}1024$, block size 256. The ``Corr.''\ column measures how much the correction mechanism contributes: the PPL difference between $K{=}1$ (no correction) and $K{=}5$ (converged).}
\label{tab:seq}
\small
\begin{tabular}{llrrrrrrc}
\toprule
$D$ & \textbf{bs} & \textbf{$K{=}1$} & \textbf{$K{=}2$} & \textbf{$K{=}3$} & \textbf{$K{=}5$} & \textbf{$K{=}10$} & \textbf{Seq} & \textbf{Corr.} \\
\midrule
1  & 256  & 70.80 & 36.21 & 33.23 & 32.70 & 32.79 & 32.80 & 38.1 \\
1  & 512  & 73.22 & 34.33 & 31.24 & 30.61 & 30.68 & 30.69 & 42.6 \\
1  & 1024 & 84.13 & 34.42 & 30.39 & 29.51 & 29.43 & \textbf{29.43} & 54.7 \\
\midrule
5  & 256  & 58.17 & 40.18 & 38.80 & 38.60 & 38.61 & 38.62 & 19.6 \\
12 & 256  & 38.33 & 32.58 & 32.30 & 32.29 & 32.29 & 32.29 & 6.0 \\
23 & 256  & 32.80 & 29.03 & 28.89 & 28.88 & 28.88 & 28.88 & 3.9 \\
\bottomrule
\end{tabular}
\end{table}

\textbf{Contraction.} With $k_{\min} = 2$, empirical $\hat{L} \in [0.50, 0.72]$ (measured as $\hat{L} = \max_t \|c_{K,t} - c_{K-1,t}\| / \|c_{K-1,t} - c_{K-2,t}\|$); without $k_{\min}$, $\hat{L} \in [0.88, 1.20]$. $\hat{L}$ is a trajectory-local diagnostic, not the global $L$ in Theorem~\ref{thm:convergence}.

\textbf{Correction FFN is essential.} Using the block's own residual as the correction (``block\_head'') gives no improvement (27.32 vs.\ 27.19 for a standard transformer). With a dedicated correction FFN, context-ready beats the FLOP-matched baseline by 1.8~PPL. Tying correction FFN weights to the block FFN collapses performance. The add variant ($\texttt{LN}(z_{t-1} + e_t)$) matches or beats token-blind at all depths.

\textbf{Training efficiency.} $K{=}5$ suffices at $D \geq 5$; $K{=}2$ with \texttt{torch.compile} achieves $1.7\times$ faster training at 1.07~PPL cost. Full ablation tables in Appendix~\ref{app:ablations}.

\section{Conclusion}
\label{sec:conclusion}

A standard transformer assigns each new token a context-free embedding and relies entirely on depth to contextualize it. The context-ready transformer shortcuts this process: a correction derived from the previous position's block output pre-contextualizes the token before it enters the block. Two structural choices---non-cumulative correction and past-only contextualization---make this exact at streaming inference (Theorem~\ref{thm:exact-streaming}) and trainable with $K$ unrolling steps (each processing all $T$ positions in parallel) rather than full BPTT.

A $D{=}5$ model beats a 12-layer transformer while generating $1.7\times$ faster; $D{=}1$ beats a 6-layer transformer with a $2.6\times$ speedup, and sequential inference matches parallel $K{=}10$ to within 0.01~PPL. The advantage grows with width and context length. Fewer layers also reduce KV cache memory. On pointer chasing, $D{=}1$ solves all composition levels that standard transformers need proportional depth to reach. Any pretrained transformer can be converted by adding a zero-initialized correction FFN and fine-tuning.

\textbf{Limitations.}
\emph{Datasets and scale.} Results are on OpenWebText, Wikipedia, and a synthetic task. We have validated at 110--150M and 375--410M FLOPs/token but not yet on standard benchmarks or at billion-parameter scale.
\emph{Training cost.} From-scratch training runs the block $K$ times per iteration rather than once. Backpropagation through $K$ steps also requires storing activations for all $K$ passes, scaling activation memory by $K\times$. Several approaches can reduce this cost: (i) pretrain as a standard transformer at $K{=}1$ cost, then convert and fine-tune---in our experiments this matches or exceeds from-scratch context-ready training (Sections~\ref{sec:exp-d1} and~\ref{sec:exp-finetune}); (ii) random-depth training ($k_{\min}$), which samples $K$ each batch and converges to the same quality as fixed $K$ (Section~\ref{sec:exp-d1}).
\emph{Prefill.} Processing a prompt of length $T$ in parallel requires $K$ unrolling steps, giving effective prefill depth $K \times D$ vs.\ $N$ for a standard transformer.

\bibliography{references_neurips}

\begin{thebibliography}{21}
\providecommand{\natexlab}[1]{#1}
\providecommand{\url}[1]{\texttt{#1}}
\expandafter\ifx\csname urlstyle\endcsname\relax
  \providecommand{\doi}[1]{doi: #1}\else
  \providecommand{\doi}{doi: \begingroup \urlstyle{rm}\Url}\fi

\bibitem[Bai et~al.(2019)Bai, Kolter, and Koltun]{bai2019deep}
Shaojie Bai, J.~Zico Kolter, and Vladlen Koltun.
\newblock Deep equilibrium models.
\newblock In \emph{Advances in Neural Information Processing Systems}, 2019.

\bibitem[Bai et~al.(2021)Bai, Koltun, and Kolter]{bai2021accelerating}
Shaojie Bai, Vladlen Koltun, and J.~Zico Kolter.
\newblock Accelerating feedforward computation via parallel nonlinear equation
  solving.
\newblock In \emph{International Conference on Machine Learning}, 2021.

\bibitem[Banino et~al.(2021)Banino, Balaguer, and
  Blundell]{banino2021pondernet}
Andrea Banino, Jan Balaguer, and Charles Blundell.
\newblock Ponder{N}et: Learning to ponder.
\newblock In \emph{ICML Workshop on Uncertainty and Robustness in Deep
  Learning}, 2021.

\bibitem[Beck et~al.(2024)Beck, P{\"o}ppel, Spanring, Auer, Prudnikova, Kopp,
  et~al.]{beck2024xlstm}
Maximilian Beck, Korbinian P{\"o}ppel, Markus Spanring, Andreas Auer,
  Oleksandra Prudnikova, Michael Kopp, et~al.
\newblock x{LSTM}: Extended long short-term memory.
\newblock \emph{arXiv preprint arXiv:2405.04517}, 2024.

\bibitem[De et~al.(2024)De, Smith, Fernando, Botev, et~al.]{de2024griffin}
Soham De, Samuel~L. Smith, Anushan Fernando, Aleksandar Botev, et~al.
\newblock Griffin: Mixing gated linear recurrences with local attention for
  efficient language models.
\newblock \emph{arXiv preprint arXiv:2402.19427}, 2024.

\bibitem[Dehghani et~al.(2019)Dehghani, Gouws, Vinyals, Uszkoreit, and
  Kaiser]{dehghani2019universal}
Mostafa Dehghani, Stephan Gouws, Oriol Vinyals, Jakob Uszkoreit, and {\L}ukasz
  Kaiser.
\newblock Universal transformers.
\newblock In \emph{International Conference on Learning Representations}, 2019.

\bibitem[Elhoushi et~al.(2024)Elhoushi, Shrivastava, Liskovich, Hosmer, Wasti,
  Lai, Mahmoud, Acber, Agarwal, Roman, et~al.]{elhoushi2024layerskip}
Mostafa Elhoushi, Akshat Shrivastava, Diana Liskovich, Basil Hosmer, Bram
  Wasti, Liangzhen Lai, Anas Mahmoud, Bilge Acber, Saurabh Agarwal, Ahmed
  Roman, et~al.
\newblock Layer{S}kip: Enabling early exit inference and self-speculative
  decoding.
\newblock \emph{arXiv preprint arXiv:2404.16710}, 2024.

\bibitem[Fu et~al.(2024)Fu, Bailis, Stoica, and Zhang]{fu2024lookahead}
Yichao Fu, Peter Bailis, Ion Stoica, and Hao Zhang.
\newblock Break the sequential dependency of {LLM} inference using lookahead
  decoding.
\newblock \emph{arXiv preprint arXiv:2402.02057}, 2024.

\bibitem[Geiping et~al.(2025)Geiping, Goldstein, Schwarzschild, Bruss,
  et~al.]{geiping2025huginn}
Jonas Geiping, Tom Goldstein, Avi Schwarzschild, C.~Bayan Bruss, et~al.
\newblock Scaling up test-time compute with latent reasoning: A recurrent depth
  approach.
\newblock \emph{arXiv preprint arXiv:2502.05171}, 2025.

\bibitem[Gokaslan and Cohen(2019)]{gokaslan2019openwebtext}
Aaron Gokaslan and Vanya Cohen.
\newblock Openwebtext corpus.
\newblock \url{http://Skylion007.github.io/OpenWebTextCorpus}, 2019.

\bibitem[Graves(2016)]{graves2016adaptive}
Alex Graves.
\newblock Adaptive computation time for recurrent neural networks.
\newblock \emph{arXiv preprint arXiv:1603.08983}, 2016.

\bibitem[Gu and Dao(2024)]{gu2024mamba}
Albert Gu and Tri Dao.
\newblock Mamba: Linear-time sequence modeling with selective state spaces.
\newblock In \emph{Proceedings of ICML 2024}, 2024.

\bibitem[Kou et~al.(2024)Kou, Hu, He, Deng, and Zhang]{kou2024cllms}
Siqi Kou, Lanxiang Hu, Zhezhi He, Zhijie Deng, and Hao Zhang.
\newblock C{LLM}s: Consistency large language models.
\newblock \emph{arXiv preprint arXiv:2403.00835}, 2024.

\bibitem[Lan et~al.(2020)Lan, Chen, Goodman, Gimpel, Sharma, and
  Soricut]{lan2020albert}
Zhenzhong Lan, Mingda Chen, Sebastian Goodman, Kevin Gimpel, Piyush Sharma, and
  Radu Soricut.
\newblock {ALBERT}: A lite {BERT} for self-supervised learning of language
  representations.
\newblock In \emph{International Conference on Learning Representations}, 2020.

\bibitem[Loshchilov and Hutter(2019)]{loshchilov2019adamw}
Ilya Loshchilov and Frank Hutter.
\newblock Decoupled weight decay regularization.
\newblock In \emph{International Conference on Learning Representations}, 2019.

\bibitem[Merrill and Sabharwal(2023)]{merrill2023parallelism}
William Merrill and Ashish Sabharwal.
\newblock The parallelism tradeoff: Limitations of log-precision transformers.
\newblock In \emph{Transactions of the Association for Computational
  Linguistics}, volume~11, pages 531--545, 2023.

\bibitem[Peng et~al.(2023)Peng, Alcaide, Anthony, Albalak, Arcadinho, Cao,
  Cheng, Chung, et~al.]{peng2023rwkv}
Bo~Peng, Eric Alcaide, Quentin Anthony, Alon Albalak, Samuel Arcadinho, Huanqi
  Cao, Xin Cheng, Michael Chung, et~al.
\newblock {RWKV}: Reinventing {RNN}s for the transformer era.
\newblock In \emph{Findings of EMNLP 2023}, 2023.

\bibitem[Siegelmann and
  Sontag(1995{\natexlab{a}})]{siegelmann1995computational}
Hava~T. Siegelmann and Eduardo~D. Sontag.
\newblock Computational capabilities of recurrent {NARX} neural networks.
\newblock \emph{IEEE Transactions on Systems, Man, and Cybernetics},
  26\penalty0 (4):\penalty0 535--544, 1995{\natexlab{a}}.

\bibitem[Siegelmann and Sontag(1995{\natexlab{b}})]{siegelmann1995power}
Hava~T. Siegelmann and Eduardo~D. Sontag.
\newblock On the computational power of neural nets.
\newblock \emph{Journal of Computer and System Sciences}, 50\penalty0
  (1):\penalty0 132--150, 1995{\natexlab{b}}.

\bibitem[Su et~al.(2024)Su, Ahmed, Lu, Pan, Bo, and Liu]{su2024roformer}
Jianlin Su, Murtadha Ahmed, Yu~Lu, Shengfeng Pan, Wen Bo, and Yunfeng Liu.
\newblock {RoFormer}: Enhanced transformer with rotary position embedding.
\newblock \emph{Neurocomputing}, 568:\penalty0 127063, 2024.

\bibitem[Yoo et~al.(2026)]{yoo2026adept}
Seunghyun Yoo et~al.
\newblock {ADEPT}: Adaptive dynamic early-exit process for transformers.
\newblock \emph{arXiv preprint arXiv:2601.03700}, 2026.

\end{thebibliography}
\bibliographystyle{plainnat}

\newpage

\section*{NeurIPS Paper Checklist}

\begin{enumerate}

\item {\bf Claims}
    \item[] Answer: \answerYes{}
    \item[] Justification: The abstract and introduction state the architectural contributions and experimental results with specific numbers. Limitations (datasets, scale, training cost) are discussed explicitly in Section~\ref{sec:conclusion}.

\item {\bf Limitations}
    \item[] Answer: \answerYes{}
    \item[] Justification: Section~\ref{sec:conclusion} includes a dedicated Limitations paragraph covering dataset scope, scale, training cost, and single-run reporting.

\item {\bf Theory assumptions and proofs}
    \item[] Answer: \answerYes{}
    \item[] Justification: All theorems and propositions are numbered and cross-referenced. Assumptions are stated explicitly in the appendix proofs (Appendix~\ref{app:proof-necessity}, \ref{app:proof-streaming}, \ref{app:proof-convergence}, \ref{app:proof-depth}). Main-body statements reference the appendix for full proofs.

\item {\bf Experimental result reproducibility}
    \item[] Answer: \answerYes{}
    \item[] Justification: The architecture is fully described in Section~\ref{sec:method}. Hyperparameters, optimizer settings, learning rate schedules, and training details are provided in Appendix~\ref{app:exp-details}. All experiments use publicly available data (OpenWebText, Wikipedia).

\item {\bf Open access to data and code}
    \item[] Answer: \answerNo{}
    \item[] Justification: Code is not released at submission time due to patent considerations. The architecture and training procedure are described in sufficient detail to reproduce.

\item {\bf Experimental setting/details}
    \item[] Answer: \answerYes{}
    \item[] Justification: Section~\ref{sec:exp-setup} describes data, tokenization, context lengths, optimizer, and baselines. Appendix~\ref{app:exp-details} provides full hyperparameter tables.

\item {\bf Experiment statistical significance}
    \item[] Answer: \answerNo{}
    \item[] Justification: All results are single runs. We acknowledge this explicitly and report trends across multiple configurations (widths, depths, block sizes) rather than relying on individual comparisons.

\item {\bf Experiments compute resources}
    \item[] Answer: \answerYes{}
    \item[] Justification: Section~\ref{sec:exp-setup} reports FLOPs per token and training iterations. Inference timing is measured on a single A100 GPU (Appendix~\ref{app:inference-timing}).

\item {\bf Code of ethics}
    \item[] Answer: \answerYes{}
    \item[] Justification: The research conforms with the NeurIPS Code of Ethics. No human subjects, private data, or dual-use concerns.

\item {\bf Broader impacts}
    \item[] Answer: \answerNA{}
    \item[] Justification: This is foundational architecture research. The work reduces inference cost for language models, which has broadly positive efficiency implications but no direct path to specific negative applications beyond those inherent to language modeling in general.

\item {\bf Safeguards}
    \item[] Answer: \answerNA{}
    \item[] Justification: No pretrained language models or scraped datasets are released.

\item {\bf Licenses for existing assets}
    \item[] Answer: \answerYes{}
    \item[] Justification: OpenWebText is cited~\citep{gokaslan2019openwebtext}. Wikipedia is publicly available. All referenced works are cited.

\item {\bf New assets}
    \item[] Answer: \answerNA{}
    \item[] Justification: No new datasets, models, or code are released with this submission.

\item {\bf Crowdsourcing and research with human subjects}
    \item[] Answer: \answerNA{}
    \item[] Justification: No crowdsourcing or human subjects research.

\item {\bf Institutional review board (IRB) approvals or equivalent for research with human subjects}
    \item[] Answer: \answerNA{}
    \item[] Justification: No human subjects research.

\item {\bf Declaration of LLM usage}
    \item[] Answer: \answerNA{}
    \item[] Justification: LLMs were not used as a component of the core methodology.

\end{enumerate}

\newpage
\appendix
\renewcommand{\thetable}{A\arabic{table}}
\renewcommand{\theHtable}{A\arabic{table}}
\setcounter{table}{0}
\renewcommand{\thefigure}{A\arabic{figure}}
\renewcommand{\theHfigure}{A\arabic{figure}}
\setcounter{figure}{0}

\section{Full Proofs}
\label{app:proofs}

\textbf{Notation.} Throughout the appendix, $e$ denotes the base token embeddings, $G$ denotes the full correction operator, and $c^{(k)}$ denotes the correction vector at iteration $k$. The fixed point is $c^* = G(c^*)$.

\subsection{Proof of Theorem~\ref{thm:necessity} (Structural Characterization)}
\label{app:proof-necessity}

\textbf{Formal statement.}

\emph{Setup.} Let $e_t \in \mathbb{R}^d$ denote the base embedding at position $t$, let $\texttt{Block}$ denote a $D$-layer transformer block with causal attention, and let $F: \mathbb{R}^d \times \mathbb{R}^d \to \mathbb{R}^d$ be a correction function. The architecture processes position $t$ as follows: form the corrected input $\tilde{x}_t = e_t + F(e_t, z_{t-1})$, compute $z_t = \texttt{Block}(\tilde{x}_t;\, \tilde{x}_{<t})$, and cache $z_t$ for future positions.

\emph{Assumptions.}
\begin{enumerate}[label=({\Roman*}),leftmargin=*,itemsep=4pt]
\item \emph{Additive correction.} The corrected input has the form $\tilde{x}_t = e_t + F(e_t, z_{t-1})$, where $F$ is a continuous correction function that maps into a bounded ball $\overline{B}(0, B) \subset \mathbb{R}^d$.
\item \emph{Single-pass streaming.} During inference, tokens are processed left-to-right, one at a time. The correction at position $t$ uses the cached block output $z_{t-1}$ from the previous position and the current embedding $e_t$. By causality, $z_{t-1}$ depends only on positions $\leq t{-}1$.
\end{enumerate}

\emph{Conclusions.}
\begin{enumerate}[label=(\alph*),leftmargin=*,itemsep=4pt]
\item \emph{Past-only.} By streaming (Property~II), the correction at position $t$ depends only on previously computed outputs: $F(e_t, z_{t-1})$ where $z_{t-1}$ encodes positions $0, \ldots, t{-}1$.
\item \emph{Non-cumulative.} Among additive unrolling strategies, only the non-cumulative form $\tilde{x}_t^{(k)} = e_t + F(e_t, z_{t-1}^{(k)})$ is compatible with streaming. The cumulative (resnet) alternative $\tilde{x}_t^{(k)} = \tilde{x}_t^{(k-1)} + F(\tilde{x}_t^{(k-1)}, z_{t-1}^{(k)})$ fails because $F(\tilde{x}_t^*, z_{t-1}^*) = 0$ at the fixed point.
\item \emph{Existence and uniqueness.} The non-cumulative past-only system has a unique fixed point, and streaming computes it exactly.
\end{enumerate}

\begin{proof}
\textbf{Part (a): Past-only.} Streaming (Property~II) processes tokens left-to-right. When token $t$ arrives, the correction uses $z_{t-1}$ (cached from the previous token) and $e_t$. Since causal attention ensures $z_{t-1}$ depends only on positions $\leq t{-}1$, the correction at position $t$ is a function of past outputs only.

\textbf{Part (b): Non-cumulative.} Given the additive correction form (Property~I), there are two ways to unroll $K$ training steps:

\emph{Non-cumulative:} $\tilde{x}_t^{(k)} = e_t + F(e_t, z_{t-1}^{(k)})$. Each step recomputes the correction from the base embedding $e_t$. At the fixed point, $F(e_t, z_{t-1}^*) = c_t^*$, a nonzero correction. In streaming, past outputs are exact (from cache), so a single evaluation gives $\tilde{x}_t = e_t + F(e_t, z_{t-1}^*) = e_t + c_t^* = \tilde{x}_t^*$. Exact.

\emph{Cumulative (resnet):} $\tilde{x}_t^{(k)} = \tilde{x}_t^{(k-1)} + F(\tilde{x}_t^{(k-1)}, z_{t-1}^{(k)})$. Each step adds an increment to the previous output. At the fixed point, $\tilde{x}_t^* = \tilde{x}_t^* + F(\tilde{x}_t^*, z_{t-1}^*)$, which forces $F(\tilde{x}_t^*, z_{t-1}^*) = 0$: the correction function learns to output zero at convergence. In streaming, the single step starts from $\tilde{x}_t^{(0)} = e_t$ and gives $\tilde{x}_t = e_t + F(e_t, z_{t-1}^*)$. But $F$ was trained to vanish at $\tilde{x}_t^*$, not at $e_t$, so the result is neither zero nor the correct fixed point $\tilde{x}_t^*$.

Therefore, among additive unrolling strategies, the non-cumulative form is the one that reproduces the correct fixed-point output during streaming.

\textbf{Part (c): Existence and uniqueness.} By part~(a), the system is triangular: $\tilde{x}_1^*$ is determined first (from $z_0 = \mathbf{0}$ and $e_1$), then $z_1^*$, then $\tilde{x}_2^*$, and so on. Each step is a deterministic evaluation, so the fixed point exists, is unique, and is constructively computable by left-to-right evaluation---exactly the streaming computation.
\end{proof}

\subsection{Proof of Theorem~\ref{thm:exact-streaming} (Exact Streaming)}
\label{app:proof-streaming}

\textbf{Formal statement.} Let $c^{*(T)}$ be the unique fixed point for a sequence of length $T$. Assume \emph{prefix consistency}: for any $T' > T$ and $t \leq T$, the correction operator satisfies $G_t^{(T')} = G_t^{(T)}$, i.e., appending tokens beyond position $T$ does not change the operator at earlier positions. (This holds by construction for any causal architecture where the operator at position $t$ depends only on positions $\leq t$.) Then:
\begin{enumerate}[label=(\alph*),leftmargin=*,itemsep=4pt]
\item \emph{Prefix invariance.} $c^{*(T')}_t = c^{*(T)}_t$ for $t \leq T$ and any $T' > T$.
\item \emph{Exactness.} If past corrections are at the fixed point, the streaming operator produces the exact fixed-point correction for the new token.
\item \emph{No contraction needed.} Exactness holds without any contraction assumption.
\end{enumerate}

The following lemma establishes that the Jacobi iteration converges in finitely many steps, which is the foundation for the streaming exactness proof.

\begin{lemma}[Finite-step exactness]
\label{thm:finite-step}
Let $G$ be a past-only correction operator. Then:
\textup{(a)}~The fixed point $c^*$ exists and is unique, without contraction.
\textup{(b)}~After $k$ Jacobi iterations from any $c^{(0)}$: $c_t^{(k)} = c_t^*$ for all $t \leq k$.
\textup{(c)}~The iteration reaches the exact fixed point after at most $T$ steps: $c^{(T)} = c^*$.
\end{lemma}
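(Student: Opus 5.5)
The plan is to make ``past-only'' precise and then argue by induction on position. Write the correction operator as $G = (G_1, \ldots, G_T)$, where $G_t$ maps a full correction vector $c = (c_1, \ldots, c_T)$ to the correction at position $t$. Past-only means $G_t(c)$ depends only on $e_t$ and on $c_1, \ldots, c_{t-1}$. This follows from the architecture: $G_t(c) = \texttt{corr\_ffn}(\texttt{LN}(z_{t-1}+e_t))$, and $z_{t-1}$ is the block output at position $t{-}1$ with inputs $\tilde{x}_s = e_s + c_s$. By causal attention, $z_{t-1}$ depends only on $\tilde{x}_1, \ldots, \tilde{x}_{t-1}$. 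In particular $G_1$ is a constant, since $z_0 = \mathbf{0}$. The Jacobi iteration is $c^{(k)} = G(c^{(k-1)})$, which is exactly the parallel unrolling in Section~\ref{sec:parallel}.

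\textbf{Part (a).} I will construct $c^*$ by forward substitution. Set $c_1^* = G_1$. For $t \ge 2$, set $c_t^* = G_t(c_1^*, \ldots, c_{t-1}^*)$, which is well defined because $G_t$ ignores coordinates $\geq t$. By construction $G(c^*) = c^*$, so a fixed point exists.

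For uniqueness, suppose $c$ is any fixed point and show $c_t = c_t^*$ by strong induction on $t$. The base case holds because $c_1 = G_1 = c_1^*$. For the inductive step, $c_t = G_t(c_{<t}) = G_t(c^*_{<t}) = c_t^*$. No Lipschitz or contraction property is used anywhere, only the triangular dependency structure.

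\textbf{Part (b).} I will prove, by induction on $k$, the claim $P(k)$: $c_t^{(k)} = c_t^*$ for all $t \le k$. $P(0)$ is vacuous. For $k \geq 1$, the case $t=1$ is immediate because $c_1^{(k)} = G_1 = c_1^*$ for every $k \geq 1$, whatever the initialization $c^{(0)}$. For $2 \le t \le k$, write $c_t^{(k)} = G_t(c^{(k-1)}_{<t})$. All indices $s < t$ satisfy $s \le k-1$, so $P(k-1)$ gives $c_s^{(k-1)} = c_s^*$, and hence $c_t^{(k)} = G_t(c^*_{<t}) = c_t^*$.

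\textbf{Part (c).} This is part (b) with $k = T$.

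The main obstacle is stating past-only carefully enough that $G_t$ is genuinely independent of $c_t$ and of later coordinates. The point to check is that $z_{t-1}$ does not see $\tilde{x}_t$. This holds because causal masking restricts position $t{-}1$ to attend only to positions $\le t{-}1$, and LayerNorm and the FFNs act per position. I will state this as the defining property of a past-only $G$, with a one-line justification from Eqs.~\eqref{eq:block} and~\eqref{eq:parallel}. Once that is in place, the rest is straightforward induction.
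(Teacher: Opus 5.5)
Your proposal is correct and follows the paper's proof essentially step for step: forward substitution for (a), using that $G_1$ is constant and $G_t$ depends only on $c_{<t}$; induction on $k$ for (b); and $k=T$ for (c). Your version is slightly more careful than the paper's in two places. You prove uniqueness explicitly by strong induction, and your inductive step covers every $t \le k$, whereas the paper checks only the newly settled position $k+1$.
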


\begin{proof}
(a) By construction: $G_1$ is a constant, so $c_1^*$ is determined. Given $c_1^*, \ldots, c_{t-1}^*$, we have $c_t^* = G_t(c_1^*, \ldots, c_{t-1}^*)$ uniquely.

(b) By induction. Base: $c_1^{(1)} = G_1 = c_1^*$. Step: assume $c_s^{(k)} = c_s^*$ for $s \leq k$. Then $c_{k+1}^{(k+1)} = G_{k+1}(c_1^{(k)}, \ldots, c_k^{(k)}) = G_{k+1}(c_1^*, \ldots, c_k^*) = c_{k+1}^*$.

(c) Set $k = T$ in (b).
\end{proof}

\begin{proof}[Proof of Theorem~\ref{thm:exact-streaming}]
(a) By prefix consistency, $G_t^{(T')} = G_t^{(T)}$ for $t \leq T$. Hence the fixed-point equations for positions $1, \ldots, T$ are identical in the length-$T$ and length-$T'$ systems, so $c^{*(T')}_t = c^{*(T)}_t$.

(b) $c^{*(T+1)}_{T+1} = G^{(T+1)}_{T+1}(c^{*(T+1)}_1, \ldots, c^{*(T+1)}_T)$. By (a), $c^{*(T+1)}_t = c^{*(T)}_t$ for $t \leq T$. The streaming operator computes exactly $G^{(T+1)}_{T+1}$ using cached corrections $c^{*(T)}$. Concretely, the abstract operator $G_t$ is realized by the correction FFN applied to the cached block output $z_{t-1}$ and the current token embedding $e_t$: once earlier positions are at their fixed-point values, $z_{t-1}$ is fully determined, and the streaming step computes $c_t = G_t(c_1^*, \ldots, c_{t-1}^*; e_t)$ exactly.

(c) By induction from the base case and part (b).
\end{proof}

\subsection{Proof of Theorem~\ref{thm:convergence} (Convergence)}
\label{app:proof-convergence}

\textbf{Formal statement.} Let $G_t(c_{<t}; e)$ denote the full correction operator at position $t$. Assume $\|G(c; e) - G(c'; e')\| \leq L\|c - c'\| + M\|e - e'\|$ for all $c, c', e, e'$. If $L < 1$, then:
\begin{enumerate}[label=(\alph*),leftmargin=*,itemsep=4pt]
\item \emph{Contraction.} $\|c^{(k)} - c^*\| \leq L^k \|c^{(0)} - c^*\|$.
\item \emph{Warm-start bound.} $\|G(c^*; e') - c^{*\prime}\| \leq \frac{LM}{1-L}\|e' - e\|$.
\end{enumerate}

\begin{proof}
\textbf{Part (a).} By the Banach fixed-point theorem with contraction constant $L < 1$.

\textbf{Part (b).} $\|c^* - c^{*\prime}\| \leq M\|e - e'\| + L\|c^* - c^{*\prime}\|$, so $\|c^* - c^{*\prime}\| \leq \frac{M}{1-L}\|e' - e\|$. Then $\|G(c^*; e') - c^{*\prime}\| \leq L\|c^* - c^{*\prime}\| \leq \frac{LM}{1-L}\|e' - e\|$.
\end{proof}

Theorem~\ref{thm:convergence} assumes $L < 1$ but does not say how to verify this from the per-position Jacobian structure. The following lemma provides a practical bound: given bounds on the partial derivatives $\|\partial G_t / \partial c_s\|$, one can choose a weighted norm that makes the global Lipschitz constant explicit.

\begin{lemma}[Causal contraction bound]
\label{thm:causal-contraction}
Let $G(c; e)$ be the full-sequence correction operator with past-only dependencies, and suppose $\|\partial G_t / \partial c_s\|_\textup{op} \leq a_{t,s}$ for $s < t$, where $\|A\|_\textup{op} = \sup_{\|x\|=1} \|Ax\|$ is the operator norm. For positive weights $w = (w_1, \ldots, w_T)$, define $\|c\|_w = \max_t w_t\|c_t\|$. Then $G$ is $L_w$-Lipschitz in $\|\cdot\|_w$ with constant:
\[
L_w = \max_{1 \leq t \leq T} w_t \sum_{s=1}^{t-1} \frac{a_{t,s}}{w_s}.
\]
In particular, if $L_w < 1$, then $G$ is a contraction.
\end{lemma}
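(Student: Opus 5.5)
The plan is to prove the Lipschitz bound coordinatewise and then take the weighted maximum. I will interpret the hypothesis in the standard way: each $G_t$ is continuously differentiable along segments, and $\partial G_t/\partial c_s = 0$ for $s \ge t$ (past-only). Fix $c, c'$ and the same embeddings $e$. For each position $t$, apply the mean value inequality to $G_t$ along the segment $c(\tau) = c' + \tau(c - c')$, $\tau \in [0,1]$. Writing $G_t(c) - G_t(c') = \int_0^1 \sum_{s<t} \frac{\partial G_t}{\partial c_s}(c(\tau))\,(c_s - c'_s)\,d\tau$ and using the operator-norm bounds, we obtain
\[
\|G_t(c) - G_t(c')\| \le \sum_{s=1}^{t-1} a_{t,s}\,\|c_s - c'_s\|.
\]
This step needs the bounds $a_{t,s}$ to hold uniformly along the segment. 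I will read the hypothesis as a global bound, or at least one holding on a convex set containing the iterates, and state that assumption explicitly.

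Next I insert the weights. For $s<t$, the definition of $\|\cdot\|_w$ gives $\|c_s - c'_s\| = \frac{1}{w_s}\, w_s\|c_s - c'_s\| \le \frac{1}{w_s}\|c - c'\|_w$. Multiplying the coordinate bound by $w_t$ then yields
\[
w_t\|G_t(c) - G_t(c')\| \le \Bigl(w_t \sum_{s=1}^{t-1} \frac{a_{t,s}}{w_s}\Bigr)\|c - c'\|_w .
\]
Taking the maximum over $t$ gives $\|G(c) - G(c')\|_w \le L_w \|c - c'\|_w$. If $L_w < 1$, $G$ is a contraction in $\|\cdot\|_w$. Since this is a norm on the finite-dimensional space $(\mathbb{R}^d)^T$, the space is complete, so Theorem~\ref{thm:convergence}(a) (Banach) applies in this norm.

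The main obstacle is not the algebra. It is justifying the mean-value step: LayerNorm and GELU are smooth, so $G_t$ is $C^1$ away from degenerate LayerNorm inputs, but the derivative bounds must hold on a convex region. I would handle this in one of two ways. One is to assume the $a_{t,s}$ are global suprema. The other is to phrase the bound as local Lipschitzness and pass to integrals, which covers the piecewise-smooth case.

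I will also remark that because $a_{t,s}$ is strictly lower-triangular, one can choose rapidly decreasing weights (for example $w_t = \epsilon^{t}$) so that every term $w_t a_{t,s}/w_s = \epsilon^{t-s}a_{t,s}$ becomes small, making $L_w < 1$ attainable for small enough $\epsilon$. This is consistent with the contraction-free finite convergence of Lemma~\ref{thm:finite-step}.
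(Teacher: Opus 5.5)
Your proposal is correct and follows the paper's proof: bound each coordinate by $\|G_t(c)-G_t(c')\| \le \sum_{s<t} a_{t,s}\|c_s-c'_s\|$, replace $\|c_s-c'_s\|$ by $\|c-c'\|_w/w_s$, multiply by $w_t$, and take the maximum over $t$. Your integral mean-value justification, with the caveat that the Jacobian bounds must hold uniformly on the segment from $c'$ to $c$, fills in a step the paper only asserts. Your remark that $w_t=\epsilon^t$ makes $L_w<1$ attainable is also correct.
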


\begin{proof}
For each $t$: $\|G_t(c) - G_t(c')\| \leq \sum_{s < t} a_{t,s}\|c_s - c_s'\| \leq \sum_{s < t} \frac{a_{t,s}}{w_s}\|c - c'\|_w$. Multiplying by $w_t$ and taking the maximum over $t$ gives $\|G(c) - G(c')\|_w \leq L_w\|c - c'\|_w$.
\end{proof}

When $K$ is chosen at training time, one may want to know how close $c^{(K)}$ is to $c^*$ without computing $c^*$. The next lemma gives a computable bound using only consecutive iterates.

\begin{lemma}[A posteriori error bound]
If $L < 1$, then after $k$ iterations:
$\|c^{(k)} - c^*\| \leq \frac{L}{1-L}\|c^{(k)} - c^{(k-1)}\|$.
\end{lemma}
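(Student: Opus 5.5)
The plan is to use the standard a posteriori estimate from the Banach fixed-point argument: compare $c^{(k)}$ with $c^*$ through one further application of $G$. I will work in whatever norm makes $G$ an $L$-contraction, with $L<1$. This may be the weighted norm $\|\cdot\|_w$ of Lemma~\ref{thm:causal-contraction} or the norm assumed in Theorem~\ref{thm:convergence}. The Jacobi iteration is $c^{(k)} = G(c^{(k-1)})$ and the fixed point satisfies $c^* = G(c^*)$, so both sides of the bound are expressed through $G$.

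First, apply the contraction property to the pair $(c^{(k-1)}, c^*)$:
\[
\|c^{(k)} - c^*\| = \|G(c^{(k-1)}) - G(c^*)\| \le L\,\|c^{(k-1)} - c^*\|.
\]
Next, apply the triangle inequality through $c^{(k)}$:
\[
\|c^{(k-1)} - c^*\| \le \|c^{(k-1)} - c^{(k)}\| + \|c^{(k)} - c^*\|.
\]
Substituting this into the first inequality gives
\[
\|c^{(k)} - c^*\| \le L\|c^{(k)} - c^{(k-1)}\| + L\|c^{(k)} - c^*\|.
\]
Since $L<1$ and $\|c^{(k)}-c^*\|$ is finite, we can move the last term to the left and divide by $1-L>0$. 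This yields $\|c^{(k)} - c^*\| \le \frac{L}{1-L}\|c^{(k)} - c^{(k-1)}\|$.

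The only care needed is bookkeeping rather than analysis. The step size is $\|c^{(k)}-c^{(k-1)}\|$, and it multiplies $L$ rather than $1$; this comes from applying the contraction to $c^{(k-1)}$ and not to $c^{(k)}$. One must also check that $c^*$ exists, which is Lemma~\ref{thm:finite-step}(a) or Banach, and that $G$ here includes the fixed embedding $e$, so the $M\|e-e'\|$ term of Theorem~\ref{thm:convergence} never enters. Finally, the result holds for every $k\ge 1$.
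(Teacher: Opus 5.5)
Your proof is correct, but it takes a different route from the paper's. The paper telescopes. It writes $c^{(k)} - c^*$ as the tail sum $\sum_{j \ge 1} \bigl(c^{(k+j-1)} - c^{(k+j)}\bigr)$ and bounds each successive difference by $L^j \|c^{(k)} - c^{(k-1)}\|$ using repeated contraction. Summing the geometric series gives the factor $\frac{L}{1-L}$. That argument needs the iterates to actually converge to $c^*$, which is supplied by the Banach argument behind Theorem~\ref{thm:convergence}(a), so that the series really equals $c^{(k)} - c^*$. It mirrors the Cauchy-sequence proof of Banach's theorem and reads the bound as the total length of the remaining trajectory.

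Your version applies the contraction once to the pair $(c^{(k-1)}, c^*)$, uses a triangle inequality through $c^{(k)}$, and absorbs $L\|c^{(k)} - c^*\|$ into the left-hand side. It needs only that $c^*$ is a fixed point and one Lipschitz step, with no limit and no infinite series. It is therefore shorter and a bit more self-contained: existence of $c^*$ from the triangular construction in Lemma~\ref{thm:finite-step}(a) suffices. Both routes give the same constant, and your remark about working in whichever norm makes $G$ an $L$-contraction (for example $\|\cdot\|_w$) applies equally to both.
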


\begin{proof}
By the triangle inequality, $\|c^{(k)} - c^*\| \leq \sum_{j=1}^{\infty} \|c^{(k+j)} - c^{(k+j-1)}\| \leq \sum_{j=1}^{\infty} L^j \|c^{(k)} - c^{(k-1)}\| = \frac{L}{1-L}\|c^{(k)} - c^{(k-1)}\|$.
\end{proof}

Theorem~\ref{thm:convergence}(a) gives a global contraction rate $L^k$ that treats all positions uniformly, but this is overly pessimistic. Because $G$ is past-only, its Jacobian is strictly lower-triangular and therefore nilpotent, so position $t$ reaches its exact fixed point after at most $t$ iterations rather than $T$. The following lemma exploits this structure to give a tighter, position-dependent error bound via causal path sums.

\begin{lemma}[Finite-depth error bound]
\label{thm:path-sum-error}
Let $G$ be a past-only correction operator with Jacobian bounds $a_{t,s}$. Let $A$ be the strictly lower-triangular matrix with entries $a_{t,s}$, and $B = \max_t \|G_t(0)\|$. After $N$ Jacobi iterations from $c^{(0)} = 0$:
\[
\|c_t^{(N)} - c_t^*\| \leq \sum_{k=N}^{T-1} [A^k]_{t,\cdot} \mathbf{1} \cdot B = [(A^N(I-A)^{-1})_{t,\cdot}] \mathbf{1} \cdot B
\]
\end{lemma}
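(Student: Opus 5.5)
The plan is to track the error vector $\delta^{(N)}_t = \|c_t^{(N)} - c_t^*\|$ and show that it is dominated componentwise by a matrix recursion driven by $A$. Since $G$ is past-only, the mean value inequality along the segment between $c^{(N-1)}_{<t}$ and $c^*_{<t}$, together with the Jacobian bounds $\|\partial G_t/\partial c_s\|_{\mathrm{op}} \le a_{t,s}$, gives
\[
\delta_t^{(N)} = \|G_t(c^{(N-1)}) - G_t(c^*)\| \le \sum_{s<t} a_{t,s}\,\delta_s^{(N-1)},
\]
that is, $\delta^{(N)} \le A\,\delta^{(N-1)}$ entrywise. The matrix $A$ is nonnegative, so this inequality can be iterated without flipping. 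Iterating yields $\delta^{(N)} \le A^N \delta^{(0)}$, where $\delta^{(0)}_t = \|c_t^*\|$ because we start from $c^{(0)}=0$.

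Next I need a bound on $\|c_t^*\|$ in terms of $B$ and powers of $A$. I would compare $c_t^*$ with $G_t(0)$:
\[
\|c_t^*\| \le \|G_t(0)\| + \|G_t(c^*) - G_t(0)\| \le B + \sum_{s<t} a_{t,s}\|c_s^*\|.
\]
In vector form this reads $\delta^{(0)} \le B\mathbf{1} + A\,\delta^{(0)}$. Because $A$ is strictly lower triangular, it is nilpotent with $A^T = 0$. Unrolling the inequality $T$ times (equivalently, inducting on $t$) therefore gives $\delta^{(0)} \le \sum_{j=0}^{T-1} A^j \mathbf{1}\,B = (I-A)^{-1}\mathbf{1}\,B$, where the Neumann series terminates exactly. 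Substituting this into the first bound gives
\[
\delta^{(N)} \le A^N (I-A)^{-1}\mathbf{1}\,B = \sum_{j=0}^{T-1} A^{N+j}\mathbf{1}\,B = \sum_{k=N}^{T-1} A^k \mathbf{1}\,B,
\]
where the last step uses $A^k=0$ for $k\ge T$. Reading off row $t$ gives both forms in the statement.

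The main technical care is in the mean value step. It requires $G_t$ to be differentiable, with the bounds $a_{t,s}$ holding along the whole segment between the two arguments. I will interpret the Jacobian bounds as global bounds, consistent with Lemma~\ref{thm:causal-contraction}, so that the blockwise estimate $\|G_t(c)-G_t(c')\| \le \sum_{s<t} a_{t,s}\|c_s-c_s'\|$ holds for all arguments. This is the same inequality used in the proof of that lemma, so I will invoke it directly.

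A useful sanity check is that the bound recovers Lemma~\ref{thm:finite-step}(b). Because $A$ is strictly lower triangular, $[A^k]_{t,\cdot} = 0$ whenever $k \ge t$. Hence the sum vanishes for $N \ge t$, which gives exactness of position $t$ after $t$ iterations.
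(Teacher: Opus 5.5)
Your proposal is correct and follows essentially the same route as the paper: a mean-value estimate using the Jacobian bounds gives the entrywise recursion $\delta^{(N)} \le A\,\delta^{(N-1)}$, iterated from $\delta^{(0)} = |c^*|$, and then $|c^*| \le B\mathbf{1} + A|c^*|$ is resolved via nilpotency of $A$ into $|c^*| \le (I-A)^{-1}B\mathbf{1}$. Your explicit checks fill in steps the paper leaves implicit: that nonnegativity of $A$ lets the inequalities be iterated, and that $A^N(I-A)^{-1} = \sum_{k=N}^{T-1}A^k$.
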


\begin{proof}
By the integral form of the mean value theorem, the error $e^{(k)} = c^{(k)} - c^*$ satisfies $e^{(k+1)}_t = \sum_{s < t} J_{t,s}^{(k)} e^{(k)}_s$ where $\|J_{t,s}^{(k)}\| \leq a_{t,s}$ by the Jacobian bound assumption. Bounding by the entrywise matrix $A$ and iterating from $e^{(0)} = -c^*$ gives $\|e^{(N)}_t\| \leq [A^N |c^*|]_t$. To bound $|c^*|$: the fixed point satisfies $c_t^* = G_t(c_{<t}^*)$, so $\|c_t^*\| \leq \|G_t(0)\| + \sum_{s < t} a_{t,s}\|c_s^*\|$, i.e., $|c^*| \leq B \cdot \mathbf{1} + A|c^*|$, hence $|c^*| \leq (I - A)^{-1} B \cdot \mathbf{1}$ (well-defined since $A$ is nilpotent). Substituting: $\|e^{(N)}_t\| \leq [A^N (I-A)^{-1}]_{t,\cdot} \mathbf{1} \cdot B$.
\end{proof}

\subsection{Proof of Proposition~\ref{prop:depth} (Depth Separation)}
\label{app:proof-depth}

\textbf{Formal statement.}

\emph{Setup.} Suppose the data is generated by a process with state update $\mathbf{s}_t = h^*(\mathbf{s}_{t-1}, \mathbf{e}_t)$, where $\mathbf{s}_t \in \mathbb{R}^m$ is the process state and $\mathbf{e}_t = (\mathbf{e}_{t-W+1}, \ldots, \mathbf{e}_t) \in \mathbb{R}^{CW}$ collects the token embeddings in the current window. The context-ready architecture (Equations~\ref{eq:corr-add}--\ref{eq:block}) with attention window $W$ maintains a state $\hat{\mathbf{b}}_t \in \mathbb{R}^{nW}$ over the full window of $W$ positions, where $n$ is the per-position state dimension. The state evolves via
$\hat{\mathbf{b}}_t = h(\hat{\mathbf{b}}_{t-1}, \mathbf{e}_t)$,
where $h: \mathbb{R}^{nW} \times \mathbb{R}^{CW} \to \mathbb{R}^{nW}$ composes the correction FFN and block (Equations~\ref{eq:corr-add}--\ref{eq:block}). Let $L_h$ be the Lipschitz constant of $h$ in its first argument.

To compare these two systems, let $\pi: \mathbb{R}^m \to \mathbb{R}^{nW}$ be a map from the process state to the architecture's state space. The architecture faithfully tracks the process when the following diagram commutes: advancing the process state by $h^*$ and then projecting gives the same result as projecting and then advancing by $h$. The commutation error
\[
\varepsilon_D \;=\; \sup_{\mathbf{s},\, \mathbf{e}}\; \bigl\|h\bigl(\pi(\mathbf{s}),\, \mathbf{e}\bigr) - \pi\bigl(h^*(\mathbf{s},\, \mathbf{e})\bigr)\bigr\|
\]
measures how far the diagram is from commuting: it captures both the block's finite-depth approximation error and any information lost when the two state spaces differ.

\emph{Assumptions.}
\begin{enumerate}[label=(\roman*),leftmargin=*,itemsep=2pt]
\item $L_h < 1$ and $\varepsilon_D < \infty$.
\item \emph{Prediction sufficiency.} $\pi$ preserves prediction-relevant information: $p(x_{t+1} \mid \mathbf{s}_t) = p(x_{t+1} \mid \pi(\mathbf{s}_t))$.
\item \emph{Lipschitz readout.} The prediction function $\phi: \mathbb{R}^{nW} \to \Delta$ is $L_\phi$-Lipschitz.
\end{enumerate}

\emph{Conclusions.}
\begin{enumerate}[label=(\alph*),leftmargin=*,itemsep=4pt]
\item \emph{Context-ready error bound.} The accumulated state error satisfies $\|\hat{\mathbf{b}}_t - \pi(\mathbf{s}_t)\| \leq \varepsilon_D / (1 - L_h)$ uniformly in $t$. By prediction sufficiency and Lipschitz readout, the prediction error is bounded by $L_\phi \varepsilon_D / (1 - L_h)$. If $\varepsilon_D = 0$, then streaming is exact for all $t$.

\item \emph{Standard transformer receptive-field bound.} Let $N$ be the number of layers in a standard transformer with attention window $W$. Then position $t$ has no computational path to any position before $t - NW$, so the transformer cannot represent any function whose output at position $t$ depends on inputs before $t - NW$.

\end{enumerate}

\begin{proof}
\textbf{Part (a).} At step $t$, the architecture computes $\hat{\mathbf{b}}_t = h(\hat{\mathbf{b}}_{t-1}, \mathbf{e}_t)$ while the projected true state satisfies $\pi(\mathbf{s}_t) = \pi(h^*(\mathbf{s}_{t-1}, \mathbf{e}_t))$. By the triangle inequality:
\[
\|\hat{\mathbf{b}}_t - \pi(\mathbf{s}_t)\| \;\leq\; \underbrace{\|h(\hat{\mathbf{b}}_{t-1}, \mathbf{e}_t) - h(\pi(\mathbf{s}_{t-1}), \mathbf{e}_t)\|}_{\leq\; L_h\|\hat{\mathbf{b}}_{t-1} - \pi(\mathbf{s}_{t-1})\|} \;+\; \underbrace{\|h(\pi(\mathbf{s}_{t-1}), \mathbf{e}_t) - \pi(h^*(\mathbf{s}_{t-1}, \mathbf{e}_t))\|}_{\leq\; \varepsilon_D}.
\]
Unrolling with $\hat{\mathbf{b}}_0 = \pi(\mathbf{s}_0)$ gives $\|\hat{\mathbf{b}}_t - \pi(\mathbf{s}_t)\| \leq \varepsilon_D \sum_{j=0}^{t-1} L_h^j \leq \varepsilon_D/(1 - L_h)$.

\textbf{Part (b).} With attention window $W$, the output of layer $\ell$ at position $t$ depends only on positions in $[t - \ell W,\, t]$. After $N$ layers, position $t$ has no computational path to any position before $t - NW$.
\end{proof}

\begin{remark}[Depth separation]
Parts (a) and (b) together suggest a depth separation: the context-ready architecture propagates context through the correction chain at no additional depth cost, while a standard windowed transformer must allocate layers for propagation. When propagation and local computation cannot be interleaved---as in the pointer-chasing task where each hop requires a separate lookup---the standard transformer needs at least $\lceil T/W \rceil$ additional layers. In general, some layers may serve both roles, so $N_{\mathrm{map}} + \lceil T/W \rceil$ is an upper bound on the required depth.
\end{remark}

\section{Pointer Chasing Details}
\label{app:pointer-chasing}

\textbf{Motivation.} Fixed-depth transformers are confined to $\mathrm{TC}^0$~\citep{merrill2023parallelism}: an $N$-layer transformer can compose at most $N$ sequential reasoning steps in a single forward pass. We design a synthetic task that directly tests this depth limit. Answering a query requires chaining a variable number of sequential lookups, so a model that can only perform a fixed number of parallel steps will fail once the required chain length exceeds its depth. The context-ready architecture sidesteps this barrier because its recurrent correction chain provides sequential computation at inference, even with a single block ($D{=}1$).

\textbf{Task definition.} The pointer-chasing task has $H$ hops and $M$ keys per level. The input contains a \emph{base table} (level 0) mapping $M$ keys to values, followed by $H$ \emph{index tables} (levels $1, \ldots, H$), each mapping $M$ keys to keys of the previous level via random permutations (bijections).  After each table, a \emph{query section} provides dense targets: a triplet \texttt{Q key answer} for every key at every level defined so far.  Resolving a query at level $\ell$ requires $\ell$ sequential lookups.

\textbf{Worked example} ($H{=}2$, $M{=}3$, 10 values).  The base table maps A$\to$v3, B$\to$v0, C$\to$v8.  Index table~1 maps D$\to$A, E$\to$B, F$\to$C.  Index table~2 maps G$\to$D, H$\to$E, I$\to$F.  The encoding uses reversed triplets (\texttt{value=key}) so that causal attention can see the value to the left of the key:

\begin{center}
\small
\textbf{Level 0 (base table + queries):}\\[2pt]
\begin{tabular}{@{}r@{\;}cccccccccccc@{}}
Input:  & v3 & = & A & v0 & = & B & v8 & = & C & $|$ & & \\
Target: & \_ & \_ & \_ & \_ & \_ & \_ & \_ & \_ & \_ & \_ & & \\[2pt]
Input:  & Q & A & v3 & Q & B & v0 & Q & C & v8 & $|$ & & \\
Target: & \_ & \textbf{v3} & \_ & \_ & \textbf{v0} & \_ & \_ & \textbf{v8} & \_ & \_ & & \\
\end{tabular}

\smallskip
\textbf{Level 1 (index table + queries):}\\[2pt]
\begin{tabular}{@{}r@{\;}cccccccccccc@{}}
Input:  & A & = & D & B & = & E & C & = & F & $|$ & & \\
Target: & \_ & \_ & \_ & \_ & \_ & \_ & \_ & \_ & \_ & \_ & & \\[2pt]
Input:  & Q & D & v3 & Q & E & v0 & Q & F & v8 & $|$ & & \\
Target: & \_ & \textbf{v3} & \_ & \_ & \textbf{v0} & \_ & \_ & \textbf{v8} & \_ & \_ & & \\
\end{tabular}

\smallskip
\textbf{Level 2 (index table + final query):}\\[2pt]
\begin{tabular}{@{}r@{\;}ccccccccccccc@{}}
Input:  & D & = & G & E & = & H & F & = & I & $|$ & Q & G \\
Target: & \_ & \_ & \_ & \_ & \_ & \_ & \_ & \_ & \_ & \_ & \_ & \textbf{v3} \\
\end{tabular}
\end{center}

\noindent
Targets (bold) appear only at key positions in query sections.  Level-0 queries are trivial lookups (Q\;A $\to$ v3).  Level-1 queries require one composition (Q\;D $\to$ A $\to$ v3).  Level-2 queries require two compositions (Q\;G $\to$ D $\to$ A $\to$ v3).  The final token is the actual test query with no answer provided in the input.  Dense targets at every level are essential: without them, the model cannot learn multi-hop composition even with BPTT.

Each level uses its own key namespace (A, B, C at level 0; D, E, F at level 1; G, H, I at level 2; etc.) to prevent ambiguity.  Key ordering within each table is fixed (not shuffled), so the model can exploit positional patterns via RoPE.

\textbf{Settings.} $H = 10$ hops, $M = 5$ keys, 10 values, embedding dimension $C = 256$, 4 attention heads, batch size 64, window size 38, fixed key ordering, per-level key tokens, RoPE attention. Learning rate $1 \times 10^{-4}$.

\textbf{Why windowed attention.} Without windowed attention, all models---including deep transformers---can directly attend from any query position to the base table, achieving ${\sim}1/M$ accuracy without genuine composition.  Windowed attention ($w = 38$) ensures that higher-level query sections cannot see the base table, forcing the model to chain through intermediate levels.  This reveals the true depth-limited structure of fixed-depth transformers.

\textbf{Wave propagation in BPTT.} The $D{=}1$ model solves levels sequentially: level 0 converges first, then level 1, then 2, and so on. This is visible in the training dynamics (see progression below). The wave pattern is consistent with corrections propagating through the recurrent chain.

\textbf{BPTT progression.} The progression below uses a smaller configuration ($C{=}128$, lr $= 1\text{e-3}$, 20 values) to demonstrate wave propagation at reduced compute:

\begin{table}[H]
\centering
\small
\begin{tabular}{lccccccccccc}
\toprule
\textbf{Iter} & L0 & L1 & L2 & L3 & L4 & L5 & L6 & L7 & L8 & L9 & L10 \\
\midrule
3K & 1.00 & 0.74 & 0.27 & 0.20 & 0.19 & 0.18 & 0.18 & 0.18 & 0.17 & 0.16 & 0.16 \\
8K & 1.00 & 1.00 & 0.80 & 0.59 & 0.36 & 0.21 & 0.21 & 0.19 & 0.20 & 0.18 & 0.15 \\
13.5K & 1.00 & 1.00 & 0.99 & 0.99 & 0.94 & 0.83 & 0.51 & 0.24 & 0.20 & 0.20 & 0.18 \\
23K & 1.00 & 1.00 & 1.00 & 1.00 & 1.00 & 0.99 & 0.99 & 0.99 & 0.99 & 0.98 & 0.98 \\
\bottomrule
\end{tabular}
\end{table}

\textbf{20-hop scaling.} At $C = 512$ with $\text{lr} = 1\text{e-4}$, the same $D{=}1$ architecture solves all 21 levels (20 hops) in ${\sim}4$K iterations, confirming that the recurrent mechanism scales to deeper composition chains.

\section{Extended Experimental Details}
\label{app:exp-details}

\subsection{Hyperparameters}

\begin{table}[H]
\centering
\caption{Hyperparameters by experiment.}
\small
\begin{tabular}{lcccc}
\toprule
\textbf{Hyperparameter} & ${\sim}85$M & ${\sim}340$M & Token-matched & Width scaling \\
\midrule
Block size & 256 & 256 & 64 & 64 \\
Batch size & 64 & 32 & 1024 & 1024 \\
Learning rate & $2 \times 10^{-4}$ & $2 \times 10^{-4}$ & $2 \times 10^{-4}$ & $2 \times 10^{-4}$ \\
Training iters & 100K & 200K & varies & varies \\
$K$ / $k_{\min}$ & 5 / 2 & 5 / 2 & 5 / 2 & 5 / 2 \\
Dropout & 0.2 & 0.2 & 0.2 & 0.2 \\
Vocab size & 32,000 & 32,000 & 32,000 & 32,000 \\
\bottomrule
\end{tabular}
\end{table}

\subsection{Ablations}
\label{app:ablations}

\begin{table}[H]
\centering
\caption{Correction FFN ablation ($C = 446$, Wikipedia, BPE 16k). ``Roformer-hFFN'' denotes a standard roformer with an extra FFN layer to match the FLOP cost of the correction FFN.}
\small
\begin{tabular}{lccc}
\toprule
\textbf{Model} & \textbf{FLOPs/tok} & \textbf{Val PPL} & \textbf{Seq.\ $K{=}1$} \\
\midrule
D=3 corr\_ffn $K{=}5$ & $44C^2$ & \textbf{23.98} & 23.96 \\
Roformer-hFFN $N{=}3$ & $44C^2$ & 25.78 & --- \\
D=3 block\_head (no corr\_ffn) & $36C^2$ & 27.32 & 28.46 \\
Roformer $N{=}3$ & $36C^2$ & 27.19 & --- \\
\bottomrule
\end{tabular}
\end{table}

\begin{table}[H]
\centering
\caption{Token-aware correction variants ($C = 446$, Wikipedia).}
\small
\begin{tabular}{llcccc}
\toprule
$D$ & \textbf{Variant} & \textbf{FLOPs} & \textbf{PPL} & \textbf{Seq $K{=}1$} & $L$ \\
\midrule
\multirow{3}{*}{2}
& corr\_ffn (token-blind) & $32C^2$ & 26.68 & 26.72 & 0.74 \\
& corr\_ffn\_add & $32C^2$ & \textbf{26.09} & 26.48 & 0.54 \\
& corr\_ffn\_concat & $36C^2$ & \textbf{25.48} & 25.82 & 0.54 \\
\midrule
\multirow{3}{*}{3}
& corr\_ffn (token-blind) & $44C^2$ & 23.98 & 23.96 & --- \\
& corr\_ffn\_add & $44C^2$ & \textbf{23.79} & 24.12 & 0.55 \\
& corr\_ffn\_concat & $48C^2$ & \textbf{23.41} & 23.73 & 0.74 \\
\bottomrule
\end{tabular}
\end{table}

\begin{table}[H]
\centering
\caption{Scale dependence ($C = 50$--$768$, Wikipedia).}
\small
\begin{tabular}{clccc}
\toprule
$C$ & \textbf{Comparison} & \textbf{Context-Ready} & \textbf{Baseline} & \textbf{$\Delta$} \\
\midrule
50  & D=3 vs.\ Roformer-hFFN $N{=}3$ & 84.3 & 83.0 & +1.3 \\
74  & D=3 vs.\ Roformer-hFFN $N{=}3$ & 62.1 & 61.4 & +0.7 \\
446 & D=3 vs.\ Roformer $N{=}4$ & 23.79 & 24.85 & $-$1.06 \\
768 & D=3 vs.\ Roformer $N{=}4$ & 18.66 & 20.05 & $-$1.39 \\
\bottomrule
\end{tabular}
\end{table}

\noindent At very small widths ($C \leq 74$), the correction FFN's overhead outweighs its benefit; the correction advantage emerges at moderate widths ($C \geq 446$) and grows with scale. All main-body claims are based on results at $C \geq 256$.

\begin{table}[H]
\centering
\caption{$k_{\min}$ ablation ($C = 50$, $D{=}1$, $K = 10$).}
\small
\begin{tabular}{lcc}
\toprule
\textbf{Metric} & \textbf{$k_{\min} = 2$} & \textbf{No $k_{\min}$} \\
\midrule
Val PPL ($K = 10$) & 84.32 & \textbf{84.16} \\
Seq $K = 1$ & 84.61 & 84.19 \\
Parallel $K = 1$ & \textbf{118.35} & 130.95 \\
Empirical $L$ & \textbf{0.72} & 0.94 \\
\bottomrule
\end{tabular}
\end{table}

\subsection{$K = 10$ Training Details}
\label{app:k10-details}

All $K{=}10$ experiments: block size 1024, $\text{lr} = 2\text{e-4}$, softmax attention, $n_\text{head} = 16$, OWT.

\begin{table}[H]
\centering
\caption{$D{=}1$ $K{=}10$ (fixed, no $k_{\min}$, batch 16) vs.\ roformer $N{=}6$ (batch 16), block size 1024.}
\small
\begin{tabular}{lrrl}
\toprule
\textbf{Iter} & \textbf{$N{=}6$} & \textbf{$D{=}1$ $K{=}10$} & \textbf{Gap} \\
\midrule
40K & 43.24 & 43.83 & +0.59 \\
60K & 39.21 & 39.27 & +0.06 \\
65K & 38.55 & 38.49 & $-$0.06 \\
80K & 36.99 & 36.38 & $-$0.61 \\
100K & 35.37 & 34.40 & $-$0.97 \\
\bottomrule
\end{tabular}
\end{table}

\subsection{Sequential $K{=}1$ Validation}
\label{app:seq-k1}

Full depth progression for all configurations, confirming Theorem~\ref{thm:exact-streaming}.

\begin{table}[H]
\centering
\caption{$D{=}1$ $C{=}2048$ depth progression across block sizes (OWT, 200K--400K iters).}
\small
\begin{tabular}{llcccccc}
\toprule
\textbf{Block size} & \textbf{Iters} & \textbf{Par.\ $K{=}1$} & \textbf{Par.\ $K{=}2$} & \textbf{Par.\ $K{=}3$} & \textbf{Par.\ $K{=}5$} & \textbf{Par.\ $K{=}10$} & \textbf{Seq.\ $K{=}1$} \\
\midrule
256  & 100K & 84.44 & 43.77 & 40.52 & 39.95 & 40.02 & 40.03 \\
256  & 400K & 70.80 & 36.21 & 33.23 & 32.70 & 32.79 & 32.80 \\
512  & 100K & 81.02 & 38.25 & 35.01 & 34.35 & 34.41 & 34.43 \\
512  & 200K & 73.22 & 34.33 & 31.24 & 30.61 & 30.68 & 30.69 \\
1024 & 200K & 84.13 & 34.42 & 30.39 & 29.51 & 29.43 & \textbf{29.43} \\
\bottomrule
\end{tabular}
\end{table}

\begin{table}[H]
\centering
\caption{Higher-$D$ depth progression ($C{=}1024$, block size 256, OWT).}
\small
\begin{tabular}{llccccc}
\toprule
$D$ & \textbf{Par.\ $K{=}1$} & \textbf{Par.\ $K{=}2$} & \textbf{Par.\ $K{=}3$} & \textbf{Par.\ $K{=}5$} & \textbf{Par.\ $K{=}10$} & \textbf{Seq.\ $K{=}1$} \\
\midrule
5  & 58.17 & 40.18 & 38.80 & 38.60 & 38.61 & 38.62 \\
8  & 51.60 & 40.12 & 39.22 & 39.10 & 39.10 & 39.10 \\
12 & 38.33 & 32.58 & 32.30 & 32.29 & 32.29 & 32.29 \\
23 & 32.80 & 29.03 & 28.89 & 28.88 & 28.88 & 28.88 \\
\bottomrule
\end{tabular}
\end{table}

At higher $D$, convergence is faster: $D{=}12$ and $D{=}23$ match $K{=}5$ within 0.01~PPL at $K{=}3$; $D{=}8$ is within 0.12~PPL. Parallel $K{=}1$ ratio to actual quality shrinks with $D$ (from $2.85\times$ at $D{=}1$ to $1.14\times$ at $D{=}23$), confirming that the correction mechanism accounts for a larger fraction of quality at low $D$.

\subsection{Block Size Scaling}
\label{app:block-size}

Longer context lengths give the $D{=}1$ correction chain more sequential steps to accumulate depth, so the gap to $N{=}6$ should shrink with block size. Table~\ref{tab:block-size} confirms this: at $K{=}5$, the gap narrows from $+1.19$ at block size 256 to $+0.31$ at 1024. With $K{=}10$ at block size 1024, $D{=}1$ overtakes $N{=}6$ entirely (Section~\ref{sec:exp-d1}).

\begin{table}[H]
\centering
\caption{$D{=}1$ $C{=}2048$ vs.\ $N{=}6$ $C{=}1088$ gap at 200K iterations across block sizes (OWT).}
\label{tab:block-size}
\small
\begin{tabular}{lrrr}
\toprule
\textbf{Block size} & \textbf{$N{=}6$ PPL} & \textbf{$D{=}1$ PPL} & \textbf{Gap} \\
\midrule
256  & 34.15 & 35.34 & +1.19 \\
512  & 30.11 & 30.57 & +0.46 \\
1024 & 29.22 & 29.53 & +0.31 \\
\bottomrule
\end{tabular}
\end{table}

\subsection{Token-Matched Training Curves}
\label{app:token-matched}

To isolate the effect of the correction mechanism from FLOP differences, we compare $D{=}x$ context-ready against $N{=}x$ standard transformers at the same embedding dimension ($C = 1024$, block size 64), so both see the same number of tokens per training iteration. At every depth tested, the context-ready model overtakes the baseline after a crossover point and the gap continues to grow.

\begin{table}[H]
\centering
\caption{Token-matched results at $C = 1024$, block size 64, OWT. Final values at 1,126M tokens.}
\small
\begin{tabular}{lrrrl}
\toprule
\textbf{Comparison} & \textbf{$N{=}x$ PPL} & \textbf{$D{=}x$ PPL} & \textbf{Gap} & \textbf{Crossover} \\
\midrule
$D{=}1$ vs.\ $N{=}1$ & 114.8 & 80.4 & $-$34.4 & ${\sim}424$M \\
$D{=}2$ vs.\ $N{=}2$ & 73.7 & 66.1 & $-$7.6 & ${\sim}565$M \\
$D{=}3$ vs.\ $N{=}3$ & 62.4 & 59.4 & $-$3.0 & ${\sim}835$M \\
$D{=}6$ vs.\ $N{=}6$ & 53.0 & 52.2 & $-$0.7 & ${\sim}1{,}032$M \\
\bottomrule
\end{tabular}
\end{table}

\subsection{Fine-Tuning Details}
\label{app:fine-tuning}

Any pretrained $N$-layer transformer can be converted to a $D{=}N$ context-ready model by adding a zero-initialized correction FFN and fine-tuning. The zero initialization ensures no disruption at conversion: the correction is identically zero, so the model behaves exactly as the original transformer. As fine-tuning progresses, the correction FFN learns to exploit cached context, yielding PPL improvements. At $D{=}12$, fine-tuning causes a transient $+0.11$~PPL increase before recovering; at $D{=}24$, there is no transient increase.

\begin{table}[H]
\centering
\caption{Fine-tuning pretrained roformers to context-ready (block size 256, OWT). ``Continued baseline'' is the roformer trained for the same total iterations without conversion.}
\small
\begin{tabular}{lrrrrr}
\toprule
\textbf{Conversion} & \textbf{Baseline} & \textbf{Fine-tuned} & \textbf{Cont.\ baseline} & \textbf{$\Delta$ vs.\ cont.} & \textbf{FT iters} \\
\midrule
$N{=}12$ $C{=}1408$ $\to$ $D{=}12$ & 29.92 & 26.14 & 27.20 & $-$1.06 & 200K \\
$N{=}24$ $C{=}1024$ $\to$ $D{=}24$ & 29.42 & 28.99 & --- & $-$0.43$^*$ & 18K \\
$N{=}12$ $C{=}1024$ $\to$ $D{=}12$ & 33.41 & 32.21 & --- & $-$1.20$^*$ & 50K \\
\bottomrule
\multicolumn{6}{l}{\footnotesize $^*$Gain vs.\ pre-conversion checkpoint; continued-training control not available.}
\end{tabular}
\end{table}

\subsection{Wikipedia Results}
\label{app:wikipedia}

For completeness, we include results on English Wikipedia (BPE 16k, context 256, 100K iterations).

\begin{table}[H]
\centering
\caption{Context-ready vs.\ baselines on Wikipedia. FLOPs/tok includes the prediction head ($VC$), which is identical within each width group.}
\small
\begin{tabular}{llccl}
\toprule
& \textbf{Model} & \textbf{FLOPs/tok} & \textbf{Val PPL} & \textbf{$\Delta$} \\
\midrule
\multirow{2}{*}{$C{=}768$}
& D=5 concat & 42.5M & \textbf{16.69} & \\
& Roformer $N{=}6$ & 42.5M & 17.95 & $-$1.26 \\
\midrule
\multirow{4}{*}{$C{=}446$}
& D=6 add & 15.9M & \textbf{20.40} & \\
& Roformer-hFFN $N{=}6$ & 15.9M & 21.44 & $-$1.04 \\
\cmidrule{2-5}
& D=2 concat & 7.2M & \textbf{25.48} & \\
& Roformer $N{=}3$ & 7.2M & 27.19 & $-$1.71 \\
\bottomrule
\end{tabular}
\end{table}

\subsection{Inference Timing}
\label{app:inference-timing}

Autoregressive generation speed measured on a single A100 GPU over 10{,}000 tokens with KV caching, batch size 1 (single-sequence generation).

\begin{table}[H]
\centering
\caption{Inference latency: context-ready vs.\ standard transformers (A100, 10K tokens, KV caching).}
\label{tab:inference}
\small
\begin{tabular}{lrrrr}
\toprule
\textbf{Model} & \textbf{Params} & \textbf{tok/s} & \textbf{ms/tok} & \textbf{Speedup} \\
\midrule
$D{=}1$ $C{=}2048$ & 215M & 919 & 1.09 & \multirow{2}{*}{$2.6\times$} \\
Roformer $N{=}6$ $C{=}1088$ & 155M & 351 & 2.85 & \\
\midrule
$D{=}5$ $C{=}1120$ & 157M & 349 & 2.86 & \multirow{2}{*}{$1.7\times$} \\
Roformer $N{=}12$ $C{=}768$ & 134M & 201 & 4.96 & \\
\bottomrule
\end{tabular}
\end{table}

\noindent Per-token latency is flat across sequence length in both comparisons ($<\!4\%$ growth from $T{=}100$ to $T{=}10{,}000$), confirming that KV caching amortizes attention cost to $O(T)$ per token. The context-ready models are faster despite having more parameters, because fewer sequential layers dominate inference latency on modern GPUs. In addition, fewer layers reduce total KV cache memory ($C \times D$ per token): $D{=}1$ $C{=}2048$ uses $3.2\times$ less cache than $N{=}6$ $C{=}1088$; $D{=}5$ $C{=}1120$ uses $1.6\times$ less than $N{=}12$ $C{=}768$.

\end{document}